\algrenewcommand\algorithmicindent{1em}
\DeclareRobustCommand\onedot{\futurelet\@let@token\@onedot}
\def\@onedot{\ifx\@let@token.\else.\null\fi\xspace}
\newcommand{\eg}{e.g\onedot}
\newcommand{\ie}{i.e\onedot}
\newcommand{\wrt}{w.r.t\onedot}
\definecolor{ourblue}{rgb}{0.368,0.507,0.71}    %
\definecolor{ourorange}{rgb}{0.881,0.611,0.142} %
\definecolor{ourgreen}{rgb}{0.56,0.692,0.195}   %
\definecolor{ourred}{rgb}{0.923,0.386,0.209}    %
\definecolor{ourviolet}{rgb}{0.528,0.471,0.701} %
\definecolor{ourbrown}{rgb}{0.772,0.432,0.102}  %
\definecolor{ourazure}{rgb}{0.364,0.619,0.782}  %
\definecolor{ourolive}{rgb}{0.572,0.586,0.}     %
\definecolor{ourgray}{RGB}{102,88,84}           %
\definecolor{ourblue2}{RGB}{9,134,223} %
\definecolor{ourdarkblue2}{RGB}{5,97,164} %
\definecolor{ourlightblue2}{RGB}{132,201,250} %
\definecolor{ourorange2}{RGB}{224,90,18} %
\definecolor{ourdarkorange2}{RGB}{160,63,9} %
\definecolor{ourlightorange2}{RGB}{246,175,137} %
\definecolor{ouryellow2}{RGB}{227,213,25} %
\definecolor{ourdarkyellow2}{RGB}{177,166,17} %
\definecolor{ourlightyellow2}{RGB}{242,235,140} %
\definecolor{ourpink2}{RGB}{247,24,139} %
\definecolor{ourdarkpink2}{RGB}{164,4,86} %
\definecolor{ourlightpink2}{RGB}{250,163,207} %
\definecolor{ourgreen2}{RGB}{159,198,52} %
\definecolor{ourdarkgreen2}{RGB}{109,138,30} %
\definecolor{ourlightgreen2}{RGB}{209,228,154} %
\definecolor{ourgray2}{RGB}{124,124,115} %
\definecolor{ourdarkgray2}{RGB}{87,87,81} %
\definecolor{ourlightgray2}{RGB}{194,194,189} %
\newtheorem{proposition}{Proposition}
\newtheorem{lemma}{Lemma}
\newtheorem{theorem}{Theorem}
\newtheorem{remark}{Remark}
\def\eqref#1{equation~\ref{#1}}
\def\1{\bm{1}}
\DeclareMathAlphabet{\mathsfit}{\encodingdefault}{\sfdefault}{m}{sl}
\SetMathAlphabet{\mathsfit}{bold}{\encodingdefault}{\sfdefault}{bx}{n}
\newcommand{\E}{\mathbb{E}}
\newcommand{\Cov}{\mathrm{Cov}}
\DeclareMathOperator*{\argmax}{arg\,max}
\DeclareMathOperator*{\argmin}{arg\,min}
\DeclareMathOperator{\Tr}{Tr}
\newcommand{\Real}{\ensuremath{\mathbb R}}        %
\newcommand{\Nat}{\ensuremath{\mathbb N}}        %
\newcommand{\T}{\ensuremath{^\top}}                %
\newcommand{\Econd}[2]{\E\left[{#1} \ \middle| \ {#2}\right]}
\newcommand{\norm}[1]{\left\lVert{#1}\right\rVert}
\newcommand{\Zspace}{\mathcal{Z}}
\newcommand{\Cspace}{\mathcal{C}}
\newcommand{\Aspace}{\mathcal{A}}
\newcommand{\filt}{\mathcal{F}}
\newcommand{\tsum}{\textstyle{\sum}}
\newcommand{\method}{OpTI-BFM\xspace}
\newcommand{\rebuttal}[1]{#1}
\title{Optimistic Task Inference for Behavior \\ Foundation Models}
\author{
    Thomas Rupf${}^1$ \quad Marco Bagatella${}^{12}$ \quad Marin Vlastelica${}^1$ \quad Andreas Krause${}^1$\vspace{.2em}\\
    ${}^1$ETH Z\"urich, Switzerland\quad ${}^2$Max Planck Institute for Intelligent Systems, Germany \vspace{.2em}\\
    \hspace{3pt}\texttt{\{thrupf,mbagatella,mvlastelica,krausea\}@ethz.ch}
}
\begin{document}

\maketitle

\begin{abstract}
    Behavior Foundation Models (BFMs) are capable of retrieving high-performing policy for any reward function specified directly at test-time, commonly referred to as zero-shot reinforcement learning (RL).
    While this is a very efficient process in terms of compute, it can be less so in terms of data: as a standard assumption, BFMs require computing rewards over a non-negligible inference dataset, assuming either access to a functional form of rewards, or significant labeling efforts.
    To alleviate these limitations, we tackle the problem of task inference purely through interaction with the environment at test-time.
    We propose \method, an optimistic decision criterion that directly models uncertainty over reward functions and guides BFMs in data collection for task inference.
    Formally, we provide a regret bound for well-trained BFMs through a direct connection to upper-confidence algorithms for linear bandits.
    Empirically, we evaluate \method on established zero-shot benchmarks, and observe that it enables successor-features-based BFMs to identify and optimize an unseen reward function in a handful of episodes with minimal compute overhead.\footnote{Code is available at \url{https://github.com/ThomasRupf/opti-bfm}.} \looseness -1
\end{abstract}

\section{Introduction}\label{sec:intro}

Zero-shot reinforcement learning \citep{touatiDoesZeroShotReinforcement2023} has gradually gained relevance as a powerful generalization of standard, single-reward RL \citep{sutton1998reinforcement, silverRewardEnough2021}. 
Zero-shot agents are designed to distill optimal policies for a set of reward functions from unlabeled, offline data \citep{touatiLearningOneRepresentation2021,agarwalProtoSuccessorMeasure2024,parkFoundationPoliciesHilbert2024,jajooRegularizedLatentDynamics2025}. 
As these methods scale to more complex and broader environments, they are often referred to as Behavior Foundation Models (BFMs) \citep{parkFoundationPoliciesHilbert2024, tirinzoniZeroShotWholeBodyHumanoid2025}.
At their core, the majority of BFMs rely on Universal Successor Features (\citet{maUniversalSuccessorFeatures2020}, USFs). 
These methods build upon explicit representations of states (\emph{features}): given a set of policies, the expected discounted sum of features along each policy's trajectory may be estimated completely offline (\eg, through TD learning) \citep{dayanImprovingGeneralizationTemporal1993,barretoSuccessorFeaturesTransfer2017}. 
As long as a reward function lies within the span of features, zero-shot policy evaluation can be performed through a simple scalar product, which in turn enables zero-shot policy improvement, \ie, each policy is implicitly paired with a reward function and updated towards optimality \citep{touatiLearningOneRepresentation2021}.

Once this unsupervised pre-training phase is complete, USFs yield a set of learned policies.
Given a reward function that is linear in the features, its optimal policy is by construction indexed by the linear weights describing the reward function in the basis of features, which can in turn be interpreted as \emph{task embeddings}. The process of finding the optimal policy, which we refer to as \textit{task inference}, thus corresponds to solving a linear regression problem. While this process is remarkably efficient in terms of compute, it retains strict requirements in terms of data: a dataset of labeled (state, reward) pairs needs to be provided.
In simple settings, this necessity is not particularly problematic: BFMs have been largely trained in simulations \citep{touatiDoesZeroShotReinforcement2023, parkFoundationPoliciesHilbert2024, tirinzoniZeroShotWholeBodyHumanoid2025}, which makes generating reward labels for the pre-training dataset, or additional data, particularly convenient.
Realistically, however, (i) the pre-training dataset might be unavailable or proprietary and, most importantly, (ii) labeling states with rewards might incur significant costs.
For instance, when BFMs are pre-trained directly from pixels, evaluating success from a single image is not a straightforward or cheap operation. \looseness -1

\begin{figure}
    \centering
    \vspace{-14pt}
    \includegraphics[width=0.9\linewidth]{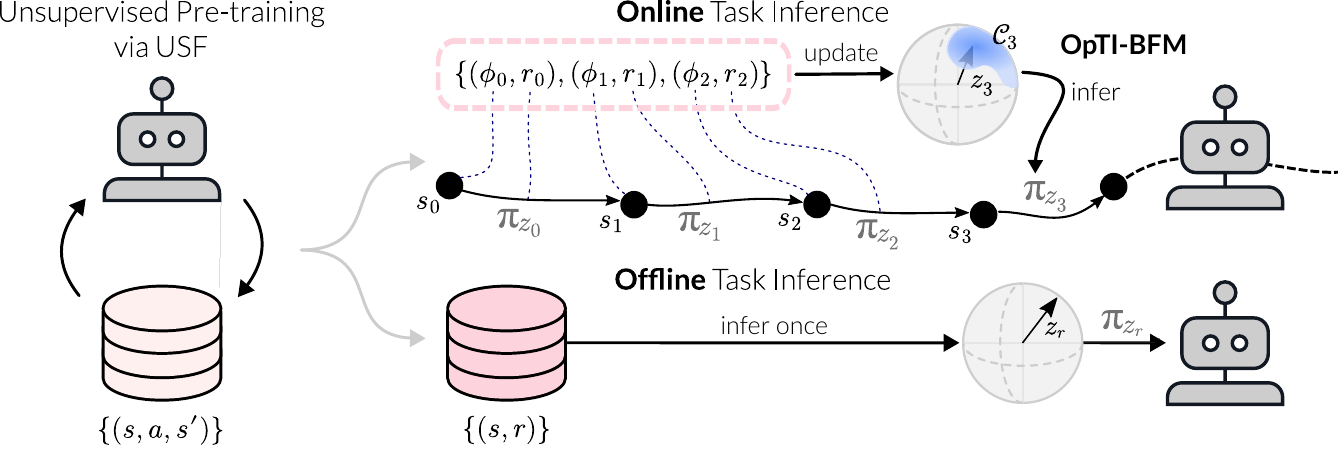}
    \caption{In contrast to the standard offline task inference pipeline for BFMs, we explore an alternative online framework: 
    instead of producing a point-estimate of the task embedding from an existing dataset, we actively collect data to build a belief over task embeddings, which results in milder labeling requirements and fast retrieval of a near-optimal policy.
    }
    \label{fig:teaser}
    \vspace{-15pt}
\end{figure}

In order to alleviate these issues, we explore an alternative framework for task inference, which instead aims to collect a small amount of data directly during deployment (see Figure \ref{fig:teaser}).
This removes the need to access the pre-training dataset (i) and may require fewer labels (ii), as the data can be collected \emph{actively}.
To navigate this setting, we propose \method, a decision criterion that curates a sequence of task embeddings.
Starting from an uninformed prior, \method leverages the linear relationship between features and rewards to update its belief over the space of rewards. A BFM conditioned on meaningfully chosen task embeddings can then interact with the environment for a few steps: labeling the states that are visited allows accurate task estimation with minimal labeling efforts. Crucially, if the underlying BFM is well-trained, we show that control reduces to a bandit problem over tasks. Under this assumption, we can provide regret guarantees for \method.
When considering established zero-shot benchmarks in the Deepmind Control Suite (DMC) \citep{tassaDeepMindControlSuite2018}, we find that \method requires only a handful of episodes to correctly identify the task, eventually matching or surpassing the performance that the standard offline reward inference pipeline achieves with significantly more data.

\section{Preliminaries}\label{sec:prelims}
\paragraph{Notation} We model the environment as a reward-free Markov Decision Process (MDP) $\mathcal{M} = (S, A, P, \mu, \gamma)$ where $S$ and $A$ are state and action spaces, respectively, $P(ds'|s, a)$ is a probability measure describing the likelihood of transitions, $\mu(ds)$ is a measure describing the initial state distribution, and $\gamma$ is a discount factor.
Given a policy $\pi: S \to \Delta(A)$ and a state-action pair $(s_0, a_0) \in S\times A$ we use $\Econd{\cdot}{s_0, a_0, \pi}$ to denote expectations \wrt state-action sequences $((s_t, a_t))_{t\ge 0}$ defined by sampling $a_t \sim \pi(s_t)$ and $s_{t+1} \sim P(s_t, a_t)$.
For a given reward function $r: S \to \Real$, we define the policy's state-action value function as the discounted sum of future rewards $Q_r^\pi(s_0,a_0) = \sum_{t\geq0} \gamma^t \Econd{r(s_{t})}{s_0,a_0,\pi}$.

\paragraph{Successor Features and Behavior Foundation Models}
For a specific feature map $\phi: S \to \Real^d$, Successor Features (\citet{barretoSuccessorFeaturesTransfer2017}, SFs) generalize state-action value functions by modeling the expected discounted sum of \textit{features} under a policy $\pi$:
\begin{align}
    \psi^\pi(s_0,a_0) = \sum_{t\ge 0} \gamma^t \,\Econd{ \phi(s_{t})}{s_0, a_0, \pi}.
\end{align}
SFs allow zero-shot policy evaluation for any reward that lies in the span of the features: 
if $r(s) = z^\top \phi(s)$ for some $z \in \Real^d$, the Q-function is a linear function of SFs:
\begin{align}
    Q_r^\pi(s_0, a_0) 
    = \sum_{t\ge 0} \gamma^t \,\Econd{r(s_{t})}{s_0,a_0,\pi} 
    = \sum_{t\ge 0} \gamma^t \,\Econd{z\T \phi(s_{t})}{s_0,a_0,\pi}
    = z\T \psi^\pi(s_0, a_0)
\end{align}
A similar structure also holds for value functions: $V_r^\pi(s) = z\T \psi^\pi(s)$, when defining $\psi^\pi(s) = \E_{a\sim\pi(\cdot | s)} \psi^\pi(s,a)$.
Behavior Foundation Models (BFMs) generally\footnote{There are BFMs that are not based on USFs. We refer the reader to \citet{agarwalUnifiedFrameworkUnsupervised2025} for a comprehensive overview.} capitalize on the opportunity of evaluating policies for multiple reward functions, by additionally learning a family of parameterized policies $(\pi_z)_{z\in\Zspace}$ with respect to all rewards in the span of features $\phi$ \citep{borsaUniversalSuccessorFeatures2018, touatiLearningOneRepresentation2021,parkFoundationPoliciesHilbert2024,agarwalProtoSuccessorMeasure2025}.
Concretely, BFMs train a family of parameterized policies so that each policy $\pi_z$ is optimal for the reward function $r(s) = \phi(s)\T z$:
\begin{align}\label{eq:usf}
    \textstyle
    \pi_z(a | s) \in \argmax_{a} \psi^{\pi_z}(s,a)\T z \quad \text{for each } z\in\Zspace.
\end{align}
In practice, the set of policies, their SFs, and features $\phi$ may be represented through function approximation: $\pi_z(a | s) \approx \pi_\xi(a | s, z)$, $\psi^{\pi_z}(s,a) \approx \psi(s,a,z)$, and $\phi(s) \approx \phi(s)$.
$\mathcal{Z}$ is a \textit{low-dimensional} space, whose elements can be seen as 
\emph{task embeddings}, as they represent reward functions in the feature basis.
The low dimensionality of task embeddings is a key component enabling efficient task inference in this work.

Given a reward function $r$ at inference, the task embedding $z_r$ parameterizing the optimal learned policy $\pi_{z_r}$ is found by minimizing the residual between $r(s)$ and $\phi(s)\T z$ (\ie, \textit{projecting} $r$ onto the span of $\phi$). 
Given a task inference dataset $\mathcal{D}=(s_i)_{i=1}^N$ this may be solved in closed form:
\begin{align}\label{eq:z_r}
    \textstyle
    z_r = \argmin_z \E_{s\sim\mathcal{D}}[(r(s) - z\T \phi(s))^2] = \Cov_\mathcal{D}(\phi)^{-1} \,\E_{s\sim\mathcal{D}}[\phi(s) r(s)].
\end{align} 
where $\Cov_\mathcal{D}(\phi)^{-1} = \E_{s,s'\sim\mathcal{D}}[\phi(s)\phi(s')\T]$.
This process of mapping from a reward function $r$ to an (approximately) optimal policy $\pi_{z_r}$ is what makes BFM based on USFs capable of zero-shot RL \citep{touatiDoesZeroShotReinforcement2023}, \ie, they can produce an optimal policy for a previously unseen reward function.
When the expectation over $\mathcal{D}$ is computed exactly, and $r$ lies in the span of $\phi$, then $\pi_{z_r}$ is guaranteed to be the optimal policy \citep{touatiLearningOneRepresentation2021}. 
However, in practice, the expectation is approximated through sampling, which requires (i) the availability of the task inference dataset $\mathcal{D}$ (potentially a subset of the pre-training data) and (ii) providing reward labels for each state $s \in \mathcal{D}$. 
As this can be an expensive operation, potentially requiring human supervision, we will propose an alternative online framework for retrieving $z_r$.

\section{Optimistic Task Inference for Behavior Foundation Models}
\subsection{Setting: Task inference at Test-time}\label{sec:setting}

We consider an alternative framework for task inference in BFMs, designed to remove the necessity for storing pre-training data and, principally, to decrease the required number of reward labels.
We focus on an online setting, in which the agent can update the task embedding $z$ during deployment, and directly control the collection of the data used to estimate $z$.
While the choice of $z$ will be uninformed at the beginning, it will ideally be possible to rapidly identify the correct task, and thus select the $z$ that retrieves the optimal policy, \ie, that coincides with the true task embedding.

More formally, we start from a pre-trained USF-based BFM, providing a set of parameterized policies $(\pi_z)_{z\in\Zspace}$
, as well as SF estimates $\psi^{\pi_z}$ of features $\phi$.
While we will now consider finite-horizon SF estimates $\psi^{\pi_z}(s_0) = \sum_{t=0}^{H-1} \gamma^t \,\Econd{ \phi(s_{t})}{s_0, \pi_z}$ to streamline the presentation and analysis, we remark that the algorithm can be easily instantiated in infinite-horizon settings, as is done in our empirical evaluation.
The agent interacts with the environment in an episodic setting with horizon $H$ and initial state distribution $\mu_0$; instead of directly selecting actions, it will select a task embedding $z_t$ at each step $t$, and execute an action sampled from the respective policy $a_t \sim \pi_{z_t}(\cdot|s_t)$.
This action will result in observing a new state $s_{t+1}$, as well as the reward $r_t$ of this transition, which constitutes the only source of information about the task\footnote{We assume that each environment interaction provides a reward label; if the agent can additionally control when to request a reward label, more efficient schemes are possible, see \cref{sec:kappa}}.

We define the discounted return of the $k$-th episode as
\begin{align}\label{eq:G}
    \hat{G}_k = \tsum_{t=0}^{H-1} \gamma^{t} r(s_{kH + t}).
\end{align}
where $s_{kH} \sim \mu_0$, $a_t \sim \pi_{z_t}(\cdot|s_t)$ and $s_{t+1} \sim P(\cdot | s_t, a_t)$.
The goal of the agent is now simply to minimize the expected regret over $n$ episodes
\begin{align}\label{eq:regret}
    R_n = \E\left[ \tsum_{k=0}^{n-1}\ \hat{G}_k^\star  - \hat{G}_k \right]
\end{align}
where $\hat{G}^\star_k$ denotes the discounted return choosing $z_r$ in each step, and the expectation is \wrt $\mu_0$, the MDP dynamics, the action distributions, and the choices of the task embeddings.
Intuitively, the agent needs to follow a decision rule that maps the observed history of states and rewards to a task embedding: $(s_0, r_0, \dots s_t, r_t) \to z_{t+1} \in \mathcal{Z}$. 
The selected task embeddings should induce informative trajectories with respect to the reward function, while largely avoiding suboptimal behavior.
This setting is reminiscent of well-developed research directions, namely exploration in the space of behavioral priors \citep{singh2020parrot} and fast adaptation \citep{sikchiFastAdaptationBehavioral2025}; however, existing methods ignore the underlying structure connecting rewards and task embeddings, which we will show may be leveraged to efficiently achieve near-optimal performance.

\subsection{Method: \method}

Our method leverages a core feature of BFMs: for well-trained USFs, the expectation of the return of the $k$-th episode (\cref{eq:G}) from an initial state $s_0$ is approximately linear w.r.t. the successor features of the policy conditioned on the $k$-th task embedding $z_k$: $\psi(s_0, z_k)^\top z_r \approx \E[\hat G_k | s_0,\pi_{z_k} ]$, where $z_r$ is the optimal task embedding, and initially unknown.
This simple property has significant implication: Policy search reduces to online optimization of a linear function, which has been extensively studied in the bandit literature \citep{daniStochasticLinearOptimization2008,abbasiImprovedAlgorithmsLinear2011}.
Building upon these fundamental results, we propose \textbf{Op}timistic \textbf{T}ask \textbf{I}nference for \textbf{BFM}s (\method) in order to efficiently explore the space of behaviors while controlling suboptimality.

Interestingly, the same approximately linear relationship that connects SFs and returns, also exists between features and rewards: $\phi(s)^\top z_r \approx r(s)$. Following the latter property, \method keeps track of a least-squares estimate of $z_r$ given the previously observed transitions
\begin{align}
    \hat{z}_t 
    = \argmin_{z\in\Zspace} \sum_{i=0}^t \left(r_i - \phi(s_i)\T z \right)^2 + \lambda \norm{z}_2^2 
    = \left(\lambda I_d + \sum_{i=0}^t \phi(s_i)\phi(s_i)\T\right)^{-1} \sum_{i=0}^t \phi(s_i) \, r_i
\end{align}
using $l_2$-regularization to ensure the inverse exists.
Rewriting this as
\begin{align}\label{eq:Vdef}
    \hat{z}_t = V_t^{-1} \tsum_{i=0}^t \phi(s_i) \,r_i,
    \quad\text{where}\quad V_t = \lambda I_d + \tsum_{i=0}^t \phi(s_i)\phi(s_i)\T
\end{align}
allows \method to not only track the mean estimate $\hat{z}_t$, but also a confidence ellipsoid $\Cspace_t$ around $\hat{z}_t$ that contains the true task embedding $z_r$ with high probability:
\begin{align}\label{eq:c}
    \Cspace_t 
    = \left\{z\in\Real^d : \big\|z - \hat{z}_{t-1}\big\|_{V_{t-1}} \le \beta_t\right\},
\end{align}
where $\beta_t$ controls the Mahalanobis distance.
The estimation of confidence sets allows \textit{optimistic} behavior in each step by choosing the task embedding $z_t$ which is believed to attain the largest return among those in the confidence set:
\begin{align}\label{eq:z_t}
    z_t \in \argmax_{z \in \Zspace} \max_{w \in \Cspace_t} \, w\T \psi(s_t, z).
\end{align}
Intuitively, this procedure conditions the BFM on the most "promising" task embedding among those that are compatible with rewards observed so far. 
Note that this algorithm has one crucial difference to Upper Confidence Bound (UCB)-based algorithms for linear contextual bandits \citep{abbasiImprovedAlgorithmsLinear2011,daniStochasticLinearOptimization2008}, as two different contexts are involved: the features $\phi$, which are used for online regression and for estimating the confidence interval, and the successor features $\psi$, which are instead used in the acquisition function in \cref{eq:z_t}. We will discuss that using $\phi$ for regression results in tighter estimates in \cref{app:lr_on_features}.

\Cref{alg:ours} instantiates \method for online task inference.
We will establish guarantees for \method in the next section, and then describe how Eq.~\ref{eq:z_t} may be optimized in practice, or avoided altogether with a Thompson Sampling (TS) variant, among others.

\begin{algorithm}
    \caption{One episode of online task inference with \method}\label{alg:ours}
    \begin{algorithmic}
    \State \textbf{Require:} BFM with $\psi^{\pi_z}(s,a)$, $\phi(s)$, and $\pi_z(a | s)$, starting state $s_0\sim\mu_0$, online Least Squares estimator $(\hat{z}_{n-1}, V_{n-1})$ (potentially initialized with past experience)
    \For{$t=0,\dots,H-1$}
        \State Find $z_t \in \argmax_{z \in \Zspace} \max_{w \in \Cspace_{n+t}} w\T \psi(s_t, z)$  \Comment{Optimism \wrt cumulative reward.}
        \State Execute action $a_t \sim \pi_{z_t}(\cdot | s_t)$
        \State Observe reward $r_t$, next state $s_{t+1}$
        \State Update $(\hat{z}_{n+t-1}, V_{n+t-1})$ with $(\phi(s_{t}), r_t)$ \rebuttal{through Eq. \ref{eq:Vdef}} \Comment{Update based on reward-feedback.}
    \EndFor
\end{algorithmic}
\end{algorithm}

\subsection{Guarantees}\label{sec:guarantees}

Leveraging a direct connection to principled algorithms for linear bandits \citep{daniStochasticLinearOptimization2008,abbasiImprovedAlgorithmsLinear2011}, we provide regret guarantees for \method. We note that this is a crucial property for online task inference, which 
could otherwise fail to gather informative data, and converge to suboptimal solutions.
For simplicity, we study a variant of \method that only updates its decision rule at the beginning of each episode, \ie, we have $z_t = z_{t-1}$ for $t \notin \{kH\}_{k=0}^{\infty}$\footnote{We provide an empirical comparison to this variant in \cref{sec:ep}.}.
This additional constraint allows us to leverage results from linear contextual bandit literature.

We can show that \method approaches the performance of the optimal policy $\pi_{z_r}$ (\cref{eq:z_r}) under the following assumptions:
\begin{itemize}
    \item[(A1)] Perfect USF for our setting: for every $(s,a,z) \in \mathcal{S \times A \times Z}$ we have
    $\psi(s_0, a_0, z) = \sum_{t=0}^{H-1} \gamma^t \Econd{\phi(s_t)}{\pi_z, s_0, a_0}
    \text{ and } \pi_z(a|s) > 0 \implies a \in \argmax_{a\in\Aspace} \psi(s,a,z)\T z$.
    \item[(A2)] Linear Reward: $r$ is in the span of features $\phi$ up to i.i.d.\ mean-zero $\sigma$-subgaussian noise $\eta_t$, \ie $r(s_t) = \phi(s_t)\T z_r + \eta_t$
    \item[(A3)] Optimization Oracle: the \method objective, \cref{eq:z_t}, can be computed exactly.
    \item[(A4)] Bounded norms: $\norm{z_r}_2 \le S$ and $\norm{\phi(\cdot)}_2 \le L$ for some $S > 0$ and $L > 0$.
\end{itemize}
Assumptions (A1) and (A2) are instrumental to recovering theoretical guarantees, but we found \method to perform well even when they are violated (see \cref{sec:experiments}).
Note that for a sufficiently large horizon $H$ the mismatch between the finite discounted sum of features we assume here and the infinite one we have in practice is negligible: the $l_2$-error is by $L \gamma^H/(1-\gamma)$.
Assumption (A3) may be empirically motivated by the efficiency of finding a good approximate solution (see \cref{sec:practice}).
In this setting, we can show that \method has sublinear regret. \looseness-1
\begin{proposition}\label{prop:main}
    (informal)
    Under assumptions A1-A4, in an episodic discounted MDP, if \method (\cref{alg:ours}) only updates $z_t$ at the start of each episode, it incurs an expected regret of $R_n \le \tilde{\mathcal{O}}\left(d\sqrt{n}\right)$.
\end{proposition}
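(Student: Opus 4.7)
The plan is to reduce the per-episode decision problem to the classical linear contextual bandit setup of OFUL (\citet{abbasiImprovedAlgorithmsLinear2011}) and transfer its regret analysis. First I would identify each bandit round with an episode $k$: under (A1), committing to $z_k$ at $s_{kH}$ makes the expected discounted return exactly $\psi(s_{kH}, z_k)^\top z_r$, so the per-episode game is linear in the unknown parameter $z_r$ with ``arm-context'' $\psi(s_{kH}, z_k) \in \Real^d$ drawn from the state-dependent action set $\{\psi(s_{kH}, z) : z \in \Zspace\}$. The observed return $\hat{G}_k$ is therefore an unbiased noisy observation whose residual has bounded conditional variance, inherited from the $\sigma$-subgaussian reward noise in (A2) and the norm bounds in (A4).

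Next I would establish the anytime self-normalized confidence set. Since $\phi(s_t)$ is predictable for the filtration indexed by environment steps and $\eta_t$ is i.i.d.\ $\sigma$-subgaussian, applying Theorem~2 of \citet{abbasiImprovedAlgorithmsLinear2011} directly to the per-step ridge estimator in \cref{eq:Vdef} yields $z_r \in \Cspace_t$ for all $t$ with probability at least $1 - \delta$ whenever $\beta_t = \sigma\sqrt{d \log((1 + tL^2/\lambda)/\delta)} + \sqrt{\lambda}\,S$. On this good event, the acquisition rule \cref{eq:z_t} picks $(z_k, w_k)$ satisfying $w_k^\top \psi(s_{kH}, z_k) \ge z_r^\top \psi(s_{kH}, z_r)$, and one line of Cauchy-Schwarz in the $V_{kH-1}^{-1}$ norm gives the standard per-episode regret bound
\begin{equation*}
    \E\bigl[\hat{G}_k^\star - \hat{G}_k \,\big|\, s_{kH}\bigr] \le 2\beta_k \,\|\psi(s_{kH}, z_k)\|_{V_{kH-1}^{-1}}.
\end{equation*}

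To aggregate over $n$ episodes I would apply Cauchy-Schwarz in $k$ followed by an elliptical potential argument, which I expect to be the main obstacle: $V_{kH-1}$ accumulates per-step outer products $\phi(s_t)\phi(s_t)^\top$, while the quantity being normed is the expected discounted sum $\psi(s_{kH}, z_k)$, so the textbook potential lemma does not apply out of the box. Two complementary resolutions are available. Route (i): use the operator inequality $\psi\psi^\top \preceq (1-\gamma^2)^{-1} \,\E\bigl[\tsum_t \phi(s_{kH+t})\phi(s_{kH+t})^\top \,\big|\, s_{kH}, \pi_{z_k}\bigr]$ (Cauchy-Schwarz on the geometric sum) to relate an episode-level Gram matrix to its step-level counterpart, then invoke the standard elliptical potential lemma. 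Route (ii): Jensen-dominate $\|\psi(s_{kH}, z_k)\|_{V_{kH-1}^{-1}} \le \sum_t \gamma^t \,\E\bigl[\|\phi(s_{kH+t})\|_{V_{kH-1}^{-1}} \,\big|\, s_{kH}, \pi_{z_k}\bigr]$ and apply the standard bound $\sum_{k,t} \|\phi(s_{kH+t})\|_{V_{kH+t-1}^{-1}}^2 \le 2 d \log(1 + nH L^2/(d\lambda))$, paying a $1/(1-\gamma)$ factor for the geometric weights. Either route gives $\tilde{\mathcal{O}}(d\sqrt{n}/(1-\gamma))$ in $d$ and $n$, with polylogarithmic factors in $n$, $H$, $\sigma$, $S$, $L$, and $\delta$. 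A final Azuma-Hoeffding step converts conditional expectations of returns into realized values at an additive $\mathcal{O}(\sqrt{n}/(1-\gamma))$ cost, and setting $\delta \sim 1/n$ absorbs the failure-probability contribution into the expected regret to yield the stated $\tilde{\mathcal{O}}(d\sqrt{n})$ bound.
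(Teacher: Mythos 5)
Your proposal is correct and follows essentially the same route as the paper: reduce each episode to one round of a linear bandit with arm-context $\psi(s_{k,0},z)$, build the anytime confidence set from the per-step feature--reward pairs via Theorem~2 of \citet{abbasiImprovedAlgorithmsLinear2011}, resolve the Gram-matrix mismatch with your route (i) --- a Cauchy--Schwarz operator inequality on the geometric sum, which is exactly the paper's $c_H=(1-\gamma^{2H})/(1-\gamma^2)$ comparison between the step-level and episode-level precision matrices --- then apply the elliptical potential lemma and set $\delta=1/n$. The one refinement the paper makes is to first rewrite $\psi(s_{k,0},z_k)=\Econd{\tilde\psi_k}{\filt_{k-1}}$ and carry out the Loewner comparison pathwise on the \emph{empirical} discounted feature sums $\tilde\psi_k=\sum_t\gamma^t\phi_{k,t}$, so that both Gram matrices are realized quantities and the potential lemma applies directly without the extra concentration or norm-shifting step your routes would otherwise require.
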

\vspace{-1.5em}
\begin{proof}
    We prove a formal version, \cref{prop:main-real}, in \cref{sec:proofs}.
    The proof is similar to the standard regret bounds for LinUCB/OFUL \citep{daniStochasticLinearOptimization2008,abbasiImprovedAlgorithmsLinear2011} except that the confidence interval is updated $H$-times per step with features that differ from the context features of the bandit.\looseness-1
\end{proof}

\section{Practical Algorithm} \label{sec:practice}

Having introduced and analyzed \method, we now discuss a practical implementation, and present some additional variants \footnote{One additional variant is presented in \cref{sec:kappa}}.

The main challenge for a practical implementation of \method lies in optimizing the decision criterion in \cref{eq:z_t}, involving two continuous spaces $\Zspace$ and $\Cspace_t$, and a highly non-linear map $z \mapsto \psi(\cdot, z)$.
Fortunately, BFMs are pre-trained such that that $w \approx \argmax_{z \in \mathcal{Z}} w^\top \psi(s_t, z)$, i.e. the optimal policy for a task described by $w$ is the one conditioned on $w$ itself. In practice, as training is not perfect, we do not strictly rely on this property, and still search $z$ over $\Cspace_t$ instead of $\Zspace$:
\footnote{We additionally consider a radius of $2\beta_t$ instead of $\beta_t$ for this confidence set.}
\begin{align}
    \textstyle
    z_t \in \argmax_{z \in \Cspace_t} \max_{w \in \Cspace_t} \, w\T\psi(s_t, z) 
\end{align}
We note that, as $\Cspace_t$ shrinks, so does the decision space, and with it the complexity of the optimization problem.
Finally, as commonly done for linear UCB (\cref{rem:ucb}), we can reformulate the objective as
\begin{align} \label{eq:ucb-opt}
    \argmax_{z \in \Cspace_t} \max_{w \in \Cspace_t} \, w\T\psi(s_t, z)
    = \argmax_{z \in \Cspace_t} \,\psi(s_t, z) \T\hat{z}_{t-1} + \beta_t \norm{\psi(s_t, z)}_{V_{t-1}^{-1}} 
\end{align}
which allows us to optimize over just one variable.
In practice, we found a simple sampling approach to be sufficient to find an approximate solution.
We optimize the final objective through random shooting with a budget of $n=128$ candidates, each of which is evaluated through a forward pass of the successor feature network $\psi$ in the BFM.
Sampling from the ellipsoid $\Cspace_t$ may be done efficiently by applying the push-forward
$\xi \mapsto \hat{z}_{t-1} + V_{t-1}^{-1/2} \beta_t \xi$
to uniform samples from the unit ball
\citep{bartheProbabilisticApproachGeometry2005}.\looseness-1

A practical implementation may also benefit from efficient updates to the key parameters: $\hat{z}_{t-1}$, $V_{t-1}^{-1/2}$, and $V_{t-1}^{-1}$.
We do so by keeping track of the information vector $\tsum_{s=1}^t \phi_t r_t$ and the Cholesky factor of $V_t$, \ie $V_t = R_t\T R_t$.
This enables updates and/or recomputation of all components in $\mathcal{O}(d^2)$ \citep{gillMethodsModifyingMatrix1974,seegerLowRankUpdates2004}
making \method very cheap (see \cref{sec:compute}).

\paragraph{Thompson Sampling Variant} The connection between least squares confidence sets and Bayesian linear regression is well established in contextual linear bandit literature \citep{kaufmannBayesianUpperConfidence2012,agrawalThompsonSamplingContextual2013}.
Concretely, we may interpret \method's internal components at time-step $t$ as a Gaussian posterior $\mathcal{N}(\hat{z}_{t-1}, V_{t-1}^{-1})$ over task embeddings that stem from a Bayesian linear regression with prior $\mathcal{N}(0, \tfrac{1}{\lambda} I_d)$ on the data $\{(\phi_i/\sigma, r_i/\sigma)\}_{i=0}^{t-1}$ (where $\sigma$ is a hyper-parameter in practice).
Intuitively, this can be thought of as starting from a prior on behaviors and then refining it over time until it converges to a single behavior.
Given this Bayesian interpretation, it is then natural to consider a Thompson Sampling (TS) approach, where we simply sample a behavior from the posterior, \ie task embedding $z_t \sim \mathcal{N}(\hat{z}_t, V_t^{-1})$, which foregoes the optimization of the UCB version.
We evaluate this variant extensively in \cref{sec:experiments}.

\paragraph{Non-stationary Rewards Variant}
Because of its online nature, \method can potentially adapt to reward functions that change over time.
To this end, we consider a variant leveraging a simple idea from non-stationary bandit literature: weighing old data points less than new ones in the least squares estimator \citep{russacWeightedLinearBandits2020}.
Concretely, we introduce a new hyper-parameter $0<\rho\le 1$ and weight data from past time-step $s$ at time-step $t$ with weight $\rho^{t-s}$. We evaluate \method in a setting with non-stationary rewards in \cref{sec:drift}.

\section{Experiments}\label{sec:experiments}
The empirical evaluation is divided in several subsections, each of which will address a specific question, as their titles suggest. 
In the following, we first detail some evaluation choices.

\paragraph{Environments} 
To evaluate performance of various adaptation/inference algorithms we consider the environments Walker, Cheetah, and Quadruped from the established ExORL \citep{yaratsDonChangeAlgorithm2022} benchmark, with four different tasks (\ie reward functions) each.
We describe full experimental protocols in \cref{sec:exp-protocol}.
In all figures, error-bars and shaded regions represent min-max-intervals over 3 training seeds.

\paragraph{Methods} 
We choose Forward-Backward (FB) framework \citep{touatiLearningOneRepresentation2021} as a state-of-the-art BFM. 
We adhere to the standard training and evaluation protocol for FB \citep{touatiDoesZeroShotReinforcement2023}, and described it in detail in \cref{sec:train-protocol}.
We apply \method on top of this BFM, as well as a Thompson-Sampling variant, \method-TS. We further consider LoLA~\citep{sikchiFastAdaptationBehavioral2025}, an approach based on policy search that was originally introduced for fast adaptation.
In compliance with our online task inference setting, we initialize LoLA with an uninformed choice of $z$, and estimate on-policy returns in the standard episodic fashion, without privileged resets.
Besides these three learning algorithms, we evaluate ``Random'' and ``Oracle'' baselines that serve as a lower and an upper bound on possible performance respectively.
The former executes a random embedding in each step, \ie $z_t \sim \text{Unif}(\Zspace)$, whereas the latter executes the optimal policy $\pi_{z_r}$, where $z_r$ is attained by solving the linear regression problem in \cref{eq:z_r}, assuming privileged access to labeled data or the reward function.
We follow standard practice and approximate $z_r$ with a large budget of 50k labeled samples from the pre-training dataset in practice \citep{touatiDoesZeroShotReinforcement2023,agarwalProtoSuccessorMeasure2025}.
If not indicated otherwise, ``relative performance'' is relative to the Oracle performance.

\subsection{How does \method compare to other online task inference methods?}\vspace{-.5em}
We first evaluate all methods in the online task inference setting described in \cref{sec:setting}, and report episodic returns (\cref{eq:G}) in \cref{fig:main}. 
We find that \method recovers Oracle performance on all tasks within 5 episodes (5k environment steps) of interaction.
We observe a significant gap between the optimistic strategy of \method and its TS variant in Cheetah.
Nevertheless, TS remains a promising approach, as it avoids any optimization problem, and simply samples task embeddings from its current belief.
Finally, we find that LoLA, which ignores the linear structure of the problem and performs blackbox policy search, makes slower progress which is better visible over 50 episodes in \cref{fig:ep} in \cref{sec:more-exp}. This result is consistent with existing ablations initializing LoLA to a random task embedding (see \citet{sikchiFastAdaptationBehavioral2025}, Figure 5), as is the case in our setting.

\begin{figure}
    \centering
    \includegraphics[width=.8\textwidth]{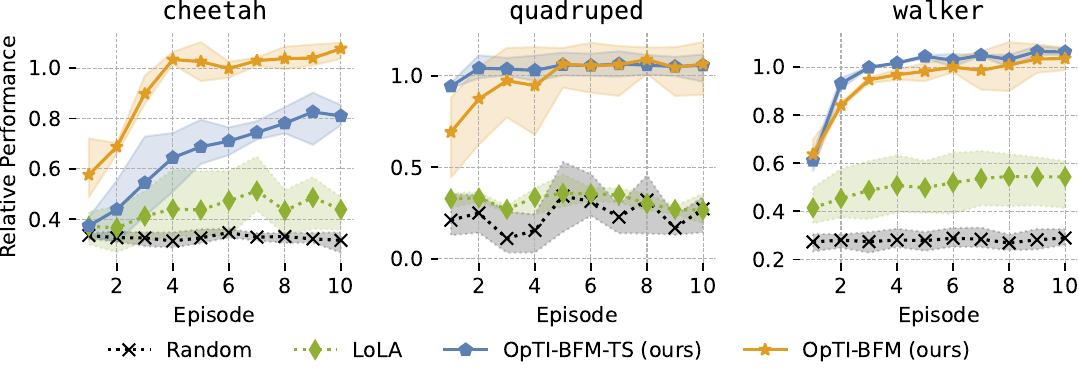}
    \vspace{-1em}
    \caption{Mean relative performance over 10 episodes of interaction in DMC.
    \method recovers Oracle performance in 5 episodes.
    We report per-task absolute performance in \cref{fig:main-ext} in \cref{sec:more-exp}.
    }\label{fig:main}
    \vspace{-1em}
\end{figure}

\subsection{Is the data collected actively by \method informative?}\vspace{-.5em}
While the previous evaluations focus on episodic returns, or equivalently regret minimization, we now evaluate the quality of the inferred task embeddings, \eg, in the case of \method, how well does $\pi_{\hat z_n}$ perform?
In practice, to ablate away any bias in $l_2$-regularized estimators, we compute the task embedding $z_n$ by minimizing the squared error in \cref{eq:z_r} over the dataset of the first $n$ observed transitions of each method, and evaluate the corresponding policy $\pi_{z_n}$ in the same environment and task.
We compare with two baseline data sources: (i) random trajectories from RND \citep{burdaExplorationRandomNetwork2018}, which represents a task-agnostic exploration approach, and (ii) the first $n$ samples from our Random baseline, which rolls out a random policy learned by the BFM.
\Cref{fig:data} shows the average relative performance of $\pi_{z_n}$ in each environment.
We find that the data from an actively exploring source, \ie \method and RND, outperforms the passive Random approach.
Furthermore, we can see that \method and its TS variant tend to be more data-efficient than RND, which can be traced back to task-awareness.

\begin{figure}
    \centering
    \includegraphics[width=0.8\linewidth, trim={0 1em 0 0}, clip]{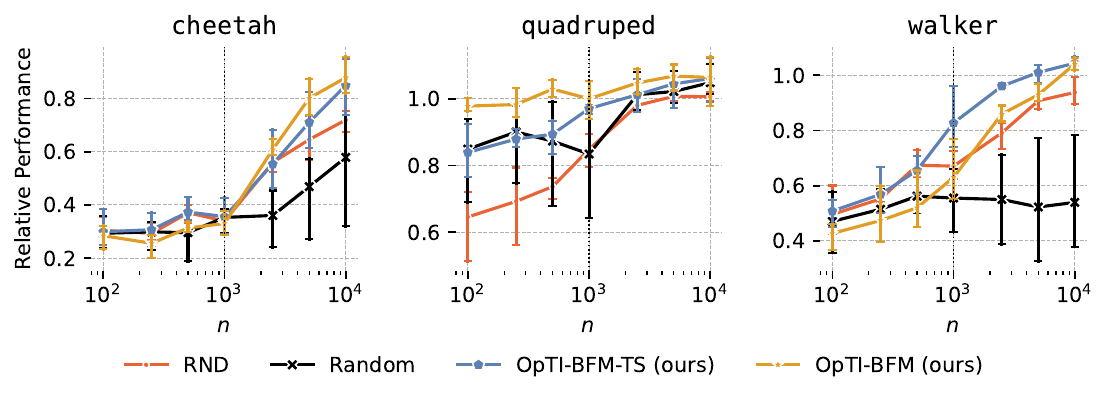}
    \vspace{-0.8em}
    \caption{Relative performance after different \# of environment interactions.
    \method is consistently among the top performers for all environments and time-steps.
    We show per task performance in \cref{fig:data-ext} in \cref{sec:more-exp}.
    }\label{fig:data}
    \vspace{-1.5em}
\end{figure}

\subsection{Are frequent updates important?}\label{sec:ep}\vspace{-.5em}
The regret bound provided in \cref{sec:guarantees} assumes a version of \method that only updates the task embedding it executes at the start of each episode.
In this section we aim to experimentally quantify this gap in theory and practice by evaluating this episodic version of \method.
\cref{fig:ep-comp} shows that changing the latent on an episode-level leads to slower improvement than changing every step.
The episodic versions do reach equal performance eventually as shown in \cref{fig:ep} in \cref{sec:more-exp}.
\rebuttal{Intuitively, this faster improvement may be explained by the fact that the default version of \method can adjust faster to new information.}

\begin{figure}
    \centering
    \includegraphics[width=0.8\linewidth]{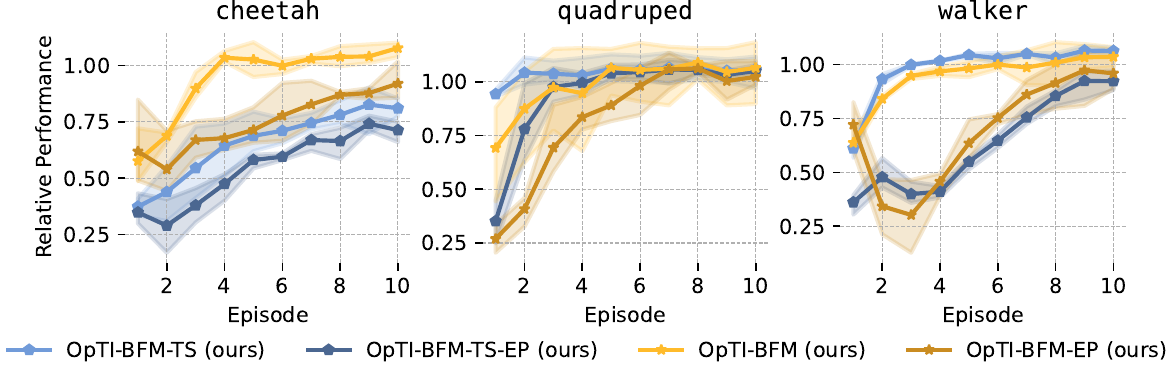}
    \vspace{-.75em}
    \caption{Relative performance of our methods, and their variants that keep task embeddings fixed for each episode (-EP). Updating the task embedding during the episode leads to faster convergence.
    For longer episode evaluations see \cref{fig:ep} in \cref{sec:more-exp}.
    }\label{fig:ep-comp}
\end{figure}

\subsection{Can \method adapt to non-stationary rewards?}\label{sec:drift}\vspace{-.5em}
It is conceivable that \method can adapt to reward functions that are changing over time because of its online nature.
To investigate whether this is possible, we introduce two new tasks for the Walker environment that closely resemble the \texttt{walk} and \texttt{run} tasks but change the velocity target over time: 
\texttt{speedup} and \texttt{slowdown} increase and decrease the velocity target respectively after an initial ``burn-in'' phase.
We evaluate \method in a single, 30k-step, episode in 
\Cref{fig:drift-vel}, which shows that a direct application of \method ($\rho=1.0$) struggles to adapt to the changing reward function after converging to a fixed one in the first 10k steps.
As soon as $\rho$ is reduced, \method tracks the velocity target more accurately. Moreover, we observe that, if $\rho$ is too small, the uncertainty is not reduced quickly enough, and the agent can converge to suboptimal behavior.

\begin{figure}
    \centering
    \begin{subfigure}{.49\linewidth}
        \includegraphics[width=\linewidth]{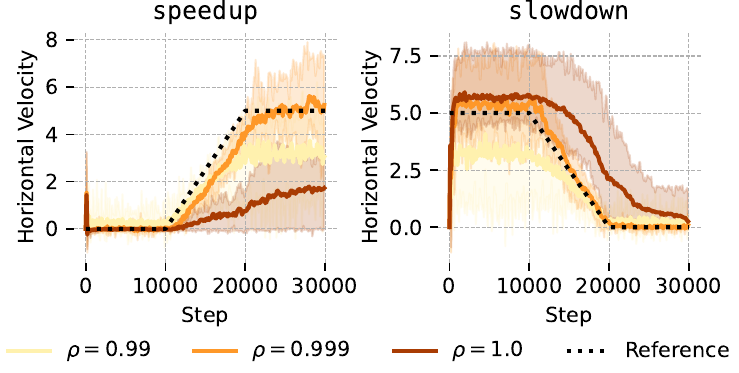}
        \vspace{-1.5em}
        \caption{\method}
    \end{subfigure}\hfill
    \begin{subfigure}{.49\linewidth}
        \includegraphics[width=\linewidth]{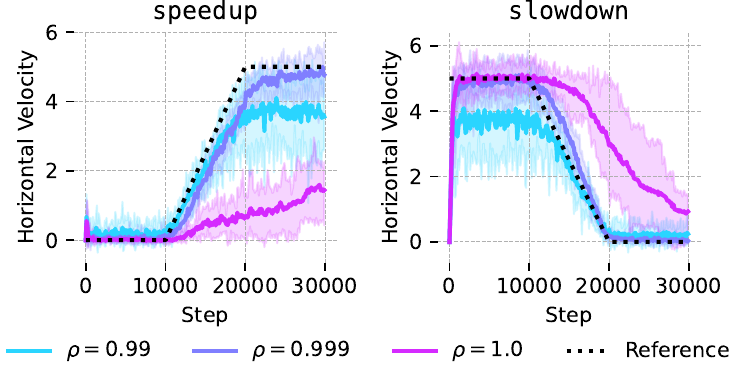}
        \vspace{-1.5em}
        \caption{\method-TS}
    \end{subfigure}
    \vspace{-.5em}
    \caption{Horizontal velocity of \method in our custom velocity tracking tasks in DMC Walker for different decay rates $\rho$.
    \method can adapt to non-stationary reward functions when decaying the weight of old observations.
    }\label{fig:drift-vel}
    \vspace{-1.5em}
\end{figure}

\subsection{Can \method warm-start from labeled data?}\label{sec:exp-boot}
\vspace{-.5em}
In settings that provide a small amount of labeled data  $\mathcal{D} = \{(s_i, r_i)\}_{i=1}^n$ from the beginning \citep{sikchiFastAdaptationBehavioral2025}, \method may be warm-started by updating the least squares estimator $n$-times with the given data.
As shown in \cref{fig:boot}, when doing so on high-quality i.i.d.\ data from the training dataset, the performance of \method quickly improves, showing that \method can additionally be seen an \textit{extension} of the traditional task inference of BFMs.
\begin{figure}
    \centering
    \includegraphics[width=\linewidth]{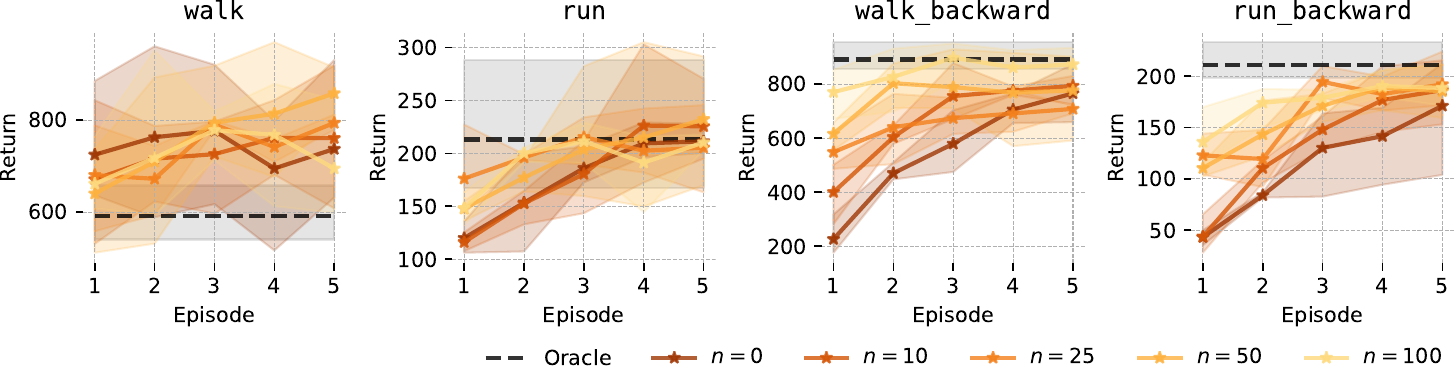}
    \vspace{-1.8em}
    \caption{
        Return of \method in DMC Cheetah when warm-starting with $n$ i.i.d.\ labeled states from the training dataset. 
        Initial performance increases quickly as $n$ grows.
        We report full results in \cref{fig:boot-full} in \cref{sec:more-exp}.
    }\label{fig:boot}.
    \vspace{-2em}
\end{figure}

\section{Related Work}\label{sec:related}

\paragraph{Behavior Foundation Models}
This work builds upon USFs \citep{dayanImprovingGeneralizationTemporal1993,barretoSuccessorFeaturesTransfer2017,maUniversalSuccessorFeatures2020}, and specifically on BFMs that utilize SFs to train a parameterized policy and the corresponding parameterized SFs.
A common approach for learning SF-based BFMs is to approximate of successor measures
\citep{blierLearningSuccessorStates2021,touatiLearningOneRepresentation2021,agarwalProtoSuccessorMeasure2025}, but other options spanning from spectral decomposition of random visitations \citep{wu2018laplacian} to implicit value learning \citep{parkFoundationPoliciesHilbert2024} also exist.
In this regard, we refer the reader to \citet{agarwalUnifiedFrameworkUnsupervised2025} for a broader overview of BFMs and unsupervised RL.
Among these approaches, the Forward Backward (FB) framework \citep{touatiLearningOneRepresentation2021} has risen to prominence, as it achieves state-of-the-art performance in standard continuous control tasks \citep{touatiDoesZeroShotReinforcement2023}, while being scalable to humanoid control \citep{tirinzoniZeroShotWholeBodyHumanoid2025}.
Recent work has explored various applications of FB's zero-shot capabilities, such as 
imitation learning \citep{pirottaFastImitationBehavior2023},
epistemic exploration 
\citep{urpiEpistemicallyguidedForwardbackwardExploration2025},
and constrained RL \citep{hugessenZeroShotConstraintSatisfaction2025}.
Among related works, our method is most closely related to \citet{sikchiFastAdaptationBehavioral2025}, which however focuses on a different problem: fine-tuning a zero-shot policy, which is already initialized through standard offline task inference. On the other hand, our goal is to find a strong policy purely through environment interaction, without relying on preexisting labeled data.
The two approaches might nevertheless be combined.

\paragraph{Reward Learning} The task inference goal of \method is fundamentally connected to the reward learning problem in RL. Among the many instantiations, rewards may be inferred from scalar evaluations \citep{knox2009interactively, macglashan2017interactive}, preferences \citep{furnkranz2012preference, christiano2017deep}, or from other types of feedback \citep{jeon2020reward}. A powerful extension to reward learning explores active strategies for query selection, largely adopting a Bayesian perspective \citep{biyik2020active, wilde2020active}. While none of these works are aimed at BFMs, several are close in spirit: for instance, \citet{lindner2021information} propose an information-directed method to achieve a similar goal to ours: inferring the task in a way which does not necessarily reduce the model error, but quickly produces an optimal policy.

\paragraph{Linear Bandits}
Linear Contextual Bandits are a well-studied problem in bandit literature \citep{abelinearcontextualbandit1999,daniStochasticLinearOptimization2008,abbasiImprovedAlgorithmsLinear2011,kaufmannBayesianUpperConfidence2012,agrawalThompsonSamplingContextual2013}.
This is influential to this work in two ways.
First, \method is inspired by UCB methods in this setting \citep{daniStochasticLinearOptimization2008,abbasiImprovedAlgorithmsLinear2011}.
Second, we utilize guarantees and ideas from this literature \citep{lattimoreBanditAlgorithms2020} to produce a regret bound for \method in an episodic setting.

\section{Conclusion}

By drawing a connection between practical work in scalable Behavior Foundation Models and rigorous algorithms for linear optimization, we proposed an algorithm that may efficiently infer the task at hand and retrieve a well-performing policy in high-dimensional, complex environments.
By minimizing the number of reward evaluations necessary for task inference, this algorithm can enable BFM to be applied beyond domains in which rewards are readily available, for instance when learning directly from pixels.

\method is designed to perform task inference in a minimal number of episodes: this is mainly possible as the search space is minimal. While updating the task embedding alone enables great sample efficiency, fine-tuning additional components of the BFM may provide even better performance in the long run \citep{sikchiFastAdaptationBehavioral2025}. \rebuttal{Moreover, our theoretical guarantees only cover slower, episode-level updates: extending these results to (empirically stronger) per-step represents an important direction for future formal works.}

While \method builds upon linearity between features and rewards, it does not make any significant assumption on the structure of the feature space $\mathcal{Z}$: understanding the properties of its elements both formally and practically constitutes an important avenue for future work. 
BFMs have demonstrated scalability up to complex domains, while maintaining structured properties (linearity!) in an embedding space: we believe that this is a ripe field for application of more theoretically principled approaches, which may in turn be impactful beyond regular domains.

\section*{Acknowledgements}
We thank Núria Armengol Urpí and Georg Martius for their help throughout the project. 
Marco Bagatella is supported by the Max Planck ETH Center for Learning Systems.
We acknowledge the support from the Swiss National Science Foundation under NCCR Automation, grant agreement 51NF40 180545.

\bibliography{references,staticreferences}
\bibliographystyle{titleurl}

\newpage
\appendix

\section{Proofs}\label{sec:proofs}
This section describes the proof for \cref{prop:main}.
It closely follows Chapters 19 and 20 from \citet{lattimoreBanditAlgorithms2020}.
We proceed as follows:
\begin{itemize}
    \item We reintroduce the setting with some more convenient notation in \cref{app:setting}.
    \item We show that running linear regression on feature-reward pairs is at least as efficient as running linear regression on SF-return pairs (\cref{app:lr_on_features})
    \item We cite a result that grants the optimal choice of $\beta$s in \cref{app:beta}.
    \item We proceed with the standard linear contextual bandit regret bound proof with confidence sets based on feature-reward pairs (\cref{app:regret}).
\end{itemize}

\subsection{Setting}
\label{app:setting}
Throughout this section we will use the double subscript $k,t$ to denote the $t$-th time-step in the $k$ episode.
Since we will analyze a version of \method that only switches task embedding at the start of each episode, and since we know that the best task embedding is always $z_r$ (Eq.~\ref{eq:z_r}), we define $z_1,\dots, z_n$ be the chosen task embeddings for $n$ episodes.
Let $\phi_{k,t} = \phi(s_{k,t})$ be the feature observed in step $t$ of episode $k$.
Note that this quantity is a random variable; throughout this section we use a simple expectation $\E[\cdot]$ to indicate the expectation over initial states, action distributions of policies, MDP dynamics, and decisions of the algorithm.
We will still use $\Econd{\cdot}{\pi, s}$ to denote the expectation over rolling out policy $\pi$ from state $s$.

\subsubsection{Assumptions}
\label{app:assumptions}
For convenience, we repeat our assumptions here with the new notation
\begin{itemize}
    \item[(A1)] Perfect USF for our setting: for every $(s,a,z) \in \mathcal{S \times A \times Z}$ we have
    \begin{align}
        \psi(s_0, a_0, z) = \sum_{t=0}^{H-1} \gamma^t \Econd{\phi(s_t)}{\pi_z, s_0, a_0}
    \end{align}
    and
    \begin{align}
        \pi_z(a|s) > 0 \implies a \in \argmax_{a\in\Aspace} \psi(s,a,z)\T z.
    \end{align}
    \item[(A2)] Linear Reward: $r$ is in the span of features $\phi$ up to i.i.d.\ mean-zero $\sigma$-subgaussian noise $\eta_{k,t}$, \ie 
    \begin{align}
        r_{k,t} = \phi(s_{k,t})\T z_r + \eta_{k,t}
    \end{align}
    \item[(A3)] Optimization Oracle: the \method objective in \cref{eq:z_t} can be computed exactly.
    \item[(A4)] Bounded norms: $\norm{z_r}_2 \le S$ and $\norm{\phi(\cdot)}_2 \le L$ for some $S > 0$ and $L > 0$.
\end{itemize}

\subsubsection{Regret}
We also repeat the definitions of discounted return of the $k$-th episode (\cref{eq:G}) and expected regret (\cref{eq:regret}) for consistent notation.
\begin{align}
    \hat{G}_k &= \tsum_{t=0}^{H-1} \gamma^t r_{k,t} \label{eq:G-app}
    \\ 
    R_n &= \E\left[ \tsum_{k=1}^{n} \hat{G}_k^\star  - \hat{G}_k\right], \label{eq:regret-app}
\end{align}
where $\hat{G}^\star_k$ is the discounted return achieved by $\pi_{z_r}$ which is optimal for the the reward $r(\cdot) = \phi(\cdot)\T z_r$ under A1.
We now also define the expected discounted return of episode $k$ for task embedding $z$ as
\begin{align}
    G_{k,z} = \Econd{\hat{G}_k}{\pi_z, s_{k,0}}.
\end{align}
Notice that this implies that $\E[\hat{G}_k^\star] = G_{k,z_r}$ and that $R_n = \E[\tsum_{k=1}^n G_{k,z^\star} - G_{k,z_k}]$, the latter of which is the term we want to bound at the end of this section.
We may now show that the expected discounted return is linear in the SFs:
\begin{align}
    G_{k,z} 
    &= \Econd{\hat{G}_k}{\pi_z, s_{k,0}}
    \\ &= \Econd{\tsum_{t=0}^{H-1} \gamma^t r_{k,t}}{\pi_z, s_{k,0}}
    \\ &= \Econd{\tsum_{t=0}^{H-1} \gamma^t \langle z_r, \phi(s_{k,t})\rangle}{\pi_z, s_{k,0}}
    \\ &= \left\langle z_r, \Econd{\tsum_{t=0}^{H-1} \gamma^t \phi(s_{k,t})}{\pi_z, s_{k,0}} \right\rangle
    \\ &\stackrel{A1}{=} \langle z_r, \psi(s_{k,0}, z)\rangle
\end{align}
Thus, our setting resembles a contextual bandit with context $s_0 \sim \mu_0$, action space $\Zspace$ and an expected payoff that is linear in $\psi(s_0, z)$, with the one notable difference being that more information is available on the pay-off feedback: it is not available in terms of return $\hat{G}_k$ alone, but additionally as a sequence of features-reward pairs $(\phi_{k,t}, r_{k,t})_{t=0}^{H-1}$.

\subsubsection{\method}
Let us finally reintroduce the least squares estimator with the double subscript notation:
\begin{align}
    \hat{z}_n = V_n^{-1} \sum_{k=1}^n\sum_{t=0}^{H-1} \phi_{k,t} r_{k,t} 
    \quad\text{where}\quad 
    V_n = \lambda I + \sum_{k=1}^n\sum_{t=0}^{H-1} \phi_{k,t} \phi_{k,t}\T.
\end{align}
From this we then redefine the confidence sets at the start of episode $k$ as
\begin{align}
\label{eq:c2}
    \Cspace_{k} = \{z \in \Real^d : \norm{z - \hat{z}_{k-1}}_{V_{k-1}} \le \beta_k \}.
\end{align}
Defining the UCB operator as
\begin{align}
    \text{UCB}_{k}(z) = \max_{w \in \Cspace_{k}} \langle \psi(s_{k,0}, z), w\rangle,
\end{align}
then \method picks $z_k = \argmax_{\Zspace} \text{UCB}_k(z)$ for the $k$-th episode, which we assume can be attained exactly in A3.

\subsection{Return-level vs. Reward-level feedback}
\label{app:lr_on_features}
In this section we show a few properties conveying the intuitive fact that, if $z_r$ is estimated from richer, reward-level feedback $(\phi_{k,t}, r_{k,t})$, confidence sets may be tigher than those produced by return-level feedback $(\tsum_{t=0}^{H-1}\gamma^t \phi_{k,t}, \tsum_{t=0}^{H-1}\gamma^t r_{k,t})$, which can be though of as an aggregation of $H$ datapoints of the former (up to expectation over dynamics).

For this analysis we define the \textit{empirical} SF of episode $k$ as
\begin{align}
    \tilde{\psi}_k = \sum_{t=0}^{H-1} \gamma^t \phi_{k,t} \quad\text{and}\quad W_n &= \lambda I + \sum_{k=1}^n \tilde{\psi}_k \tilde{\psi}_k\T.
\end{align}
Note that we have $\Econd{\tilde{\psi}_k}{\pi_{z_k}, s_{k,0}} = \psi(s_{k,0}, z_k)$.
We further define
\begin{align}
    A_k = \sum_{t=0}^{H-1} \phi_{k,t} \phi_{k,t}\T,
\end{align}
so that we have $V_n = \lambda I + \sum_{k=1}^n A_k$.
Finally, the following constant \footnote{Note that $c_H \ge 1$ for $H > 0$.} will become useful later:
\begin{align}
    c_H = \sum_{t=0}^{H-1} \gamma^{2t} = \frac{1 - \gamma^{2H}}{1-\gamma^2}.
\end{align}

We now show that the precision matrices obtained through reward-level feedback ($V_n$) grow at least as fast as those obtained through return-level feedback ($W_n$) over the course of one episode.
\begin{proposition}\label{prop:psi-Ak}
    We have, in Loewner order,
    \begin{align}
        \tilde{\psi}_k \tilde{\psi}_k\T \preccurlyeq c_H A_k.
    \end{align}
\end{proposition}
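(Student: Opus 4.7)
The plan is to establish the Loewner inequality $\tilde{\psi}_k \tilde{\psi}_k^\top \preccurlyeq c_H A_k$ by showing $x^\top \tilde{\psi}_k \tilde{\psi}_k^\top x \le c_H \cdot x^\top A_k x$ for every $x \in \mathbb{R}^d$. Unpacking both sides with the given definitions, the left-hand side becomes the square of a weighted sum, $(\sum_{t=0}^{H-1} \gamma^t x^\top \phi_{k,t})^2$, while the right-hand side is $c_H \sum_{t=0}^{H-1}(x^\top \phi_{k,t})^2$. The natural tool here is Cauchy--Schwarz applied to the pair of sequences $(\gamma^t)_{t=0}^{H-1}$ and $(x^\top \phi_{k,t})_{t=0}^{H-1}$.

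Concretely, I would write
\begin{align*}
x^\top \tilde{\psi}_k \tilde{\psi}_k^\top x
&= \Bigl(\sum_{t=0}^{H-1} \gamma^t \,(x^\top \phi_{k,t})\Bigr)^{2} \\
&\le \Bigl(\sum_{t=0}^{H-1} \gamma^{2t}\Bigr)\Bigl(\sum_{t=0}^{H-1} (x^\top \phi_{k,t})^{2}\Bigr) \\
&= c_H \cdot x^\top\Bigl(\sum_{t=0}^{H-1} \phi_{k,t}\phi_{k,t}^\top\Bigr) x \\
&= c_H \cdot x^\top A_k x,
\end{align*}
where the inequality is Cauchy--Schwarz, and the penultimate equality just rewrites the scalar square $(x^\top \phi_{k,t})^2$ as the quadratic form $x^\top \phi_{k,t}\phi_{k,t}^\top x$. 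Since $x$ was arbitrary, this establishes the claimed Loewner ordering.

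I do not foresee any genuine obstacle: the statement is essentially a rank-one bound saying that a squared sum of $H$ vectors is dominated, up to the geometric weight normalization $c_H$, by their Gram matrix. The only things to be careful about are (i) using Cauchy--Schwarz in the right direction (so that the scalar $c_H$ appears as the weight), and (ii) recognizing that no probabilistic content is needed -- this is a deterministic pointwise statement about the realized features $\phi_{k,0}, \dots, \phi_{k,H-1}$, with no expectation over dynamics or policies required.
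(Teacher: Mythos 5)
Your proof is correct and follows exactly the same route as the paper's: both expand $x^\top \tilde{\psi}_k \tilde{\psi}_k^\top x$ as $\bigl(\sum_{t=0}^{H-1} \gamma^t x^\top \phi_{k,t}\bigr)^2$ and apply Cauchy--Schwarz to the sequences $(\gamma^t)$ and $(x^\top \phi_{k,t})$, producing the factor $c_H = \sum_{t=0}^{H-1}\gamma^{2t}$. No differences worth noting.
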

\begin{proof}
    This is a direct application of a Cauchy-Schwarz inequality:  for any $x$,
    \begin{align}
        x\T \tilde{\psi}_k \tilde{\psi}_k\T x
        = (x\T \tilde{\psi}_k)^2
        = \left(\sum_{t=0}^{H-1} \gamma^t x\T \phi_{k,t}\right)^2
        \le \left( \sum_{t=0}^{H-1} \gamma^{2t} \right)\left(\sum_{t=0}^{H-1} (x\T \phi_{k,t})^2 \right)
        = c_H x\T A_k x \nonumber
    \end{align}
\end{proof}
From this, it immediately follows $V_n$ grows at least as fast as $W_n$ over all episodes.
\begin{proposition}\label{prop:Vn-Wn}
    We have, in Loewner order,
    \begin{align}
        V_n \succcurlyeq \frac{1}{c_H} W_n
    \end{align}
\end{proposition}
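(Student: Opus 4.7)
The plan is to reduce this directly to the per-episode inequality already established in Proposition~\ref{prop:psi-Ak}, and then aggregate over episodes while carefully handling the $\lambda I$ regularization term. Since both $V_n$ and $W_n$ are defined as $\lambda I$ plus a sum of per-episode rank-$H$ and rank-$1$ contributions respectively, and since Loewner order is preserved under summation, the result should follow almost mechanically.

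First I would invoke Proposition~\ref{prop:psi-Ak} to write $\tilde{\psi}_k \tilde{\psi}_k^\top \preccurlyeq c_H A_k$ for every episode $k \in \{1, \dots, n\}$, equivalently $\frac{1}{c_H}\tilde{\psi}_k \tilde{\psi}_k^\top \preccurlyeq A_k$. Summing the latter over $k$ from $1$ to $n$ and using the fact that the positive semidefinite cone is closed under addition yields
\begin{align}
\sum_{k=1}^n A_k \;\succcurlyeq\; \frac{1}{c_H}\sum_{k=1}^n \tilde{\psi}_k \tilde{\psi}_k^\top.
\end{align}

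Second, I would add the regularizer. Observe that $c_H = \sum_{t=0}^{H-1}\gamma^{2t} \geq 1$ whenever $H \geq 1$, so $\lambda I \succcurlyeq \tfrac{\lambda}{c_H} I$. Combining this with the previous inequality gives
\begin{align}
V_n \;=\; \lambda I + \sum_{k=1}^n A_k
\;\succcurlyeq\; \tfrac{\lambda}{c_H} I + \tfrac{1}{c_H}\sum_{k=1}^n \tilde{\psi}_k \tilde{\psi}_k^\top
\;=\; \tfrac{1}{c_H}\Bigl(\lambda I + \sum_{k=1}^n \tilde{\psi}_k \tilde{\psi}_k^\top\Bigr) \;=\; \tfrac{1}{c_H} W_n,
\end{align}
which is exactly the claim.

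There is essentially no hard step here: the only subtlety worth flagging is the bookkeeping around the regularizer, since naively scaling both sides would leave an inconsistent $\lambda I$ term on the left versus $\tfrac{\lambda}{c_H} I$ on the right. The inequality $c_H \geq 1$ is what rescues this and lets us pull out the common factor $1/c_H$ cleanly.
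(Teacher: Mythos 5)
Your proposal is correct and follows essentially the same route as the paper: both sum the per-episode inequality from Proposition~\ref{prop:psi-Ak} and then use $c_H \ge 1$ to absorb the mismatch in the $\lambda I$ regularizer (your $\lambda I \succcurlyeq \tfrac{\lambda}{c_H} I$ is equivalent to the paper's discarding of the $(1 - \tfrac{1}{c_H})\lambda I \succcurlyeq 0$ term). No gaps.
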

\begin{proof}
    Using \Cref{prop:psi-Ak}, we know that
    \begin{align}
        A_k 
        \succcurlyeq \frac{1}{c_H} \tilde{\psi}_k \tilde{\psi}_k\T.
    \end{align}
    Summing over episodes on both sides, we get
    \begin{align}
        \sum_{k=1}^n A_k \succcurlyeq \frac{1}{c_H} \sum_{k=1}^n \tilde{\psi}_k \tilde{\psi}_k\T.
    \end{align}
    By adding $\lambda I$ on both sides, we can write
    \begin{align}
        \lambda I + \sum_{k=1}^n A_k 
        \succcurlyeq \lambda I + \frac{1}{c_H} \sum_{k=1}^n \tilde{\psi}_k \tilde{\psi}_k\T.
    \end{align}
    On the left, we recognize
    \begin{align}
        \lambda I + \sum_{k=1}^n A_k = V_n,
    \end{align}
    while on the right we have
    \begin{align}
        \lambda I + \frac{1}{c_H} \sum_{k=1}^n \tilde{\psi}_k \tilde{\psi}_k\T
        = \frac{1}{c_H} \left(\lambda I + \sum_{k=1}^n \tilde{\psi}_k \tilde{\psi}_k\T \right) + \left(1 - \frac{1}{c_H}\right) \lambda I
        = \frac{1}{c_H} W_n + \left(1 - \frac{1}{c_H}\right)\lambda I \nonumber,
    \end{align}
    which in turn yields
    \begin{align}
        V_n \succcurlyeq \frac{1}{c_H} W_n + \left(1 - \frac{1}{c_H}\right)\lambda I \succcurlyeq \frac{1}{c_H} W_n,
    \end{align}
    where the last step follows from the fact that $c_H \ge 1$.
\end{proof}
We now derive a useful property from this result, that will later help us to apply the well-known elliptical potential Lemma.
Intuitively it shows that reward-level feedback leads to confidence ellipsoids at least as tight as those of return-level feedback.
\begin{proposition}\label{prop:Vn-Wn-inv-norm}
    We have for any $x\in \Real^d$
    \begin{align}
        \norm{x}_{V_n^{-1}} \le \sqrt{c_H} \norm{x}_{W_n^{-1}}
    \end{align}
\end{proposition}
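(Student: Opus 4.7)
The plan is to reduce the claim to the monotonicity of matrix inversion with respect to the Loewner order, which is the standard tool for converting an inequality between positive-definite precision matrices into an inequality between their induced Mahalanobis (semi-)norms.

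First, I would invoke Proposition~\ref{prop:Vn-Wn}, which gives $V_n \succcurlyeq \tfrac{1}{c_H} W_n$. Since both $V_n$ and $W_n$ are positive definite (thanks to the $\lambda I$ ridge term in their definitions), I can apply the fact that $A \succcurlyeq B \succ 0$ implies $B^{-1} \succcurlyeq A^{-1}$. Applying this to $V_n \succcurlyeq \tfrac{1}{c_H} W_n$ yields
\begin{equation*}
V_n^{-1} \preccurlyeq \left(\tfrac{1}{c_H} W_n\right)^{-1} = c_H\, W_n^{-1}.
\end{equation*}

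Next, I would evaluate the quadratic form associated with the Loewner inequality at any $x \in \mathbb{R}^d$: by definition of the Loewner order, $V_n^{-1} \preccurlyeq c_H W_n^{-1}$ means $x^\top V_n^{-1} x \le c_H\, x^\top W_n^{-1} x$, i.e.\ $\norm{x}_{V_n^{-1}}^2 \le c_H \norm{x}_{W_n^{-1}}^2$. Taking square roots (both sides are nonnegative) gives the claimed $\norm{x}_{V_n^{-1}} \le \sqrt{c_H}\,\norm{x}_{W_n^{-1}}$.

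There is no real obstacle here, as the result is a direct corollary of Proposition~\ref{prop:Vn-Wn} combined with the operator-monotone-decreasing property of $A \mapsto A^{-1}$ on positive-definite matrices. The only care needed is to ensure both matrices are strictly positive definite (so that inversion is well-defined and order-reversing), which is guaranteed by the $\lambda I$ regularizer for $\lambda > 0$.
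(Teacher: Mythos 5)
Your proposal is correct and follows exactly the same route as the paper's proof: invoke Proposition~\ref{prop:Vn-Wn}, use the order-reversing property of matrix inversion on positive-definite matrices to obtain $V_n^{-1} \preccurlyeq c_H W_n^{-1}$, and then pass to the quadratic form and take square roots. No gaps; the explicit attention to positive definiteness via the $\lambda I$ term matches the paper's remark that both matrices are positive definite by construction.
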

\begin{proof}
    By \cref{prop:Vn-Wn}, 
    \begin{align}
        V_n \succcurlyeq \frac{1}{c_H} W_n,
    \end{align}
    where both matrices are positive definite by construction.
    As the inverse thus reverses order, 
    \begin{align}
        V_n^{-1} \preccurlyeq c_H W_n^{-1}.
    \end{align}
    As a consequence, one can show that, for any $x$,
    \begin{align}
        \norm{x}_{V_n^{-1}}
        = \sqrt{x\T V_n^{-1} x}
        \le \sqrt{c_H x\T W_n^{-1} x}
        = \sqrt{c_H} \norm{x}_{W_n^{-1}}.
    \end{align}
\end{proof}

\subsection{Confidence Ellipsoids}
\label{app:beta}
We will now take take the chance to cite a standard result for optimal choices of radii $\beta_{k}$ which guarantee that the true parameter $z_r$ is contained in confidence sets (\cref{eq:c})in each time-step with high probability.
\begin{theorem}\label{thm:betas}
    (Adapted from \citet{abbasiImprovedAlgorithmsLinear2011} Theorem 2)\\
    Given confidence sets $\Cspace_{k}$ as defined in \cref{eq:c2}, and if rewards are in the span of $\phi$ up to $\sigma$-subgaussian zero-mean noise (A2), then for any $\delta \in (0, 1)$ the following holds: $\Pr[\exists k\in \Nat^+ | z_r \notin \Cspace_{k}] \le \delta$ with
    \begin{align}
        \beta_{k} = \sqrt{\lambda}S + \sigma\sqrt{\log\left(\frac{\det V_{k-1}}{\lambda^d}\right) + 2\log\left(\frac{1}{\delta}\right)}
    \end{align}
\end{theorem}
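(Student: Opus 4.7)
The statement is the classical self-normalized concentration inequality for $l_2$-regularized least squares of \citet{abbasiImprovedAlgorithmsLinear2011}, and I would reproduce their method-of-mixtures proof with the filtration refined to resolve individual transitions rather than whole episodes. The plan is a three-step argument: an algebraic error decomposition, a self-normalized bound on the noise term, and combination via the triangle inequality. For the first step, substituting A2 into the normal equation $V_n \hat z_n = \sum_i \phi_i r_i$ yields $V_n(\hat z_n - z_r) = S_n - \lambda z_r$, where $S_n := \sum_i \phi_i \eta_i$ and the index $i$ runs over all transitions before the start of episode $k$. Multiplying by $V_n^{-1}$, taking the $V_n$-norm, and using the triangle inequality gives $\norm{\hat z_n - z_r}_{V_n} \le \norm{S_n}_{V_n^{-1}} + \lambda \norm{z_r}_{V_n^{-1}}$. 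Since $V_n \succeq \lambda I$ implies $V_n^{-1} \preceq \lambda^{-1} I$, the regularization term is bounded deterministically by $\sqrt{\lambda}\,\norm{z_r}_2 \le \sqrt{\lambda}\, S$ using A4.

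For the second step, let $(\gF_t)_{t\ge 0}$ be the filtration generated by the history of states, actions, and rewards up to transition $t$, indexed across all episodes. By construction every $\phi_i$ is $\gF_{i-1}$-measurable, while by A2 every $\eta_i$ is $\gF_i$-measurable, conditionally zero-mean, and $\sigma$-subgaussian. For any fixed $w\in\Real^d$, applying the subgaussian MGF bound conditioned on $\gF_{t-1}$ step-by-step shows that $M_t(w) := \exp\bigl(w^\top S_t - \tfrac{\sigma^2}{2}\, w^\top(\sum_{i\le t}\phi_i\phi_i^\top)\,w\bigr)$ is a nonnegative supermartingale with $\E[M_0(w)] = 1$. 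Mixing $w$ against a zero-mean Gaussian prior of covariance $\sigma^{-2}\lambda^{-1} I$ yields a new nonnegative supermartingale $\bar M_t := \int M_t(w)\,p(w)\,dw$; a direct completion-of-squares in the Gaussian integrand evaluates it to $\bar M_t = (\det V_t / \lambda^d)^{-1/2} \exp\bigl(\tfrac{1}{2\sigma^2}\norm{S_t}^2_{V_t^{-1}}\bigr)$. Applying Ville's maximal inequality to $\bar M_t$ and inverting the event $\bar M_t \le 1/\delta$ for the norm gives that, with probability at least $1-\delta$ simultaneously for all $t\ge 0$, $\norm{S_t}^2_{V_t^{-1}} \le \sigma^2 \bigl[\log(\det V_t / \lambda^d) + 2\log(1/\delta)\bigr]$.

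Combining the deterministic bound from step one with this probabilistic bound, evaluated at $n = k-1$, gives $\norm{\hat z_{k-1} - z_r}_{V_{k-1}} \le \beta_k$ simultaneously for all $k\in\Nat^+$ on a single event of probability at least $1-\delta$, which is the claim. The main technical obstacle is the method-of-mixtures step: verifying the supermartingale property of $M_t(w)$ is routine but pedantic, and crucially the use of Ville's inequality rather than a fixed-time Doob inequality is what converts pointwise concentration into the desired uniform-in-$k$ statement with no additional logarithmic union-bound penalty. A secondary subtlety, specific to our episodic setting, is that the filtration must track individual transitions rather than entire episodes to preserve predictability of $\phi_{k,t}$; this holds automatically since $z_k$ is fixed throughout episode $k$ and each $\phi_{k,t}$ depends on history only through time $t-1$.
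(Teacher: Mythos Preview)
Your proposal is correct and follows exactly the method-of-mixtures argument of \citet{abbasiImprovedAlgorithmsLinear2011} that the paper cites; the paper itself does not reprove the theorem but simply invokes that reference together with the remark that the predictability conditions on $\phi_{k,t}$ and $\eta_{k,t}$ are met in the episodic MDP setting. Your explicit handling of the transition-level filtration and the use of Ville's inequality for the uniform-in-$k$ guarantee are precisely the points the paper's one-line justification alludes to, so there is no substantive difference in approach.
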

This is the classic uniform-in-time self-normalized bound based on supermartingales.
Notice that the theorem makes no assumptions about the action selection process nor the process that controls the state-changes, except that they are measurable in the natural filtration of the corresponding time-step, which is the case for our decision rule (\method) and our episodic MDP setting.
We remark that, since by definition $V_t \succcurlyeq V_s$ for all $t\ge s$, we have that $\beta_t \ge \beta_s$ as well.

\subsection{Regret Bound}
\label{app:regret}
We can now prove a formal version of \cref{prop:main}.
\begin{proposition}\label{prop:main-real}
    Under assumptions A1-A4, with choices for $\beta_k$ as in \cref{thm:betas}, if \method is applied at the beginning of each episode and selects the task embedding $z_k = \argmax_{\Zspace} \text{UCB}_k(z)$ for the $k$-th episode, it incurs an expected regret of
    $\tilde{\mathcal{O}}(d\sqrt{n})$.
\end{proposition}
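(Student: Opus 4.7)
The plan is to follow the standard optimistic-linear-bandit regret analysis adapted to our episodic setting, crucially leveraging the observation in \cref{prop:Vn-Wn-inv-norm} that reward-level feedback yields tighter confidence than return-level feedback. Let $\mathcal F_{k-1}$ denote the $\sigma$-algebra containing everything revealed before episode $k$ begins, so that $V_{k-1}$, $\hat z_{k-1}$, $\beta_k$, $s_{k,0}$, and the decision $z_k$ are all $\mathcal F_{k-1}$-measurable. I would first condition on the good event $E = \{z_r \in \Cspace_k \text{ for all } k \leq n\}$; by \cref{thm:betas} with $\delta = 1/n$ one has $\Pr(E^c) \leq 1/n$, and a routine Cauchy-Schwarz plus second-moment bound on the returns shows the contribution of $E^c$ to $R_n$ is only $\tilde O(\sqrt n)$, which is dominated by the main term.

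On $E$, for each episode $k$, optimism provides some $\tilde z_k \in \Cspace_k$ with $\langle \psi(s_{k,0}, z_k), \tilde z_k\rangle \geq G_{k,z_r}$, since $z_r \in \Cspace_k$ and $z_k$ maximizes the UCB objective. Using $G_{k, z_k} = \langle \psi(s_{k,0}, z_k), z_r\rangle$ from A1 and A2, Cauchy-Schwarz in the $V_{k-1}$-geometry combined with the ellipsoid diameter bound yields the familiar one-step inequality
\begin{align}
    G_{k, z_r} - G_{k, z_k}
    \;\leq\; \langle \psi(s_{k,0}, z_k), \tilde z_k - z_r\rangle
    \;\leq\; 2\beta_k \, \|\psi(s_{k,0}, z_k)\|_{V_{k-1}^{-1}}.
\end{align}
Here the analysis departs from textbook LinUCB: the norm uses $V_{k-1}$, built from the \emph{per-step} features $\phi_{k,t}$, not the SF Gram one would obtain from direct return-level regression. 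Applying \cref{prop:Vn-Wn-inv-norm} I would replace this by $\sqrt{c_H}\,\|\psi(s_{k,0}, z_k)\|_{W_{k-1}^{-1}}$. Since under A1 the conditional mean of $\tilde\psi_k$ given $\mathcal F_{k-1}$ equals $\psi(s_{k,0}, z_k)$, and $W_{k-1}$ is $\mathcal F_{k-1}$-measurable, a conditional Jensen bound gives
\begin{align}
    \|\psi(s_{k,0}, z_k)\|_{W_{k-1}^{-1}}^2 \;\leq\; \Econd{\|\tilde\psi_k\|_{W_{k-1}^{-1}}^2}{\mathcal F_{k-1}}.
\end{align}

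To conclude, I would apply Cauchy-Schwarz across episodes, take total expectation, and use monotonicity of $\beta_k$ (upper bounding it by a deterministic $\bar\beta_n = O(\sqrt{d \log(nH/\delta)})$, obtained from \cref{thm:betas} via the pathwise bound $\det V_n \leq ((\lambda + nHL^2)/d)^d$) to obtain
\begin{align}
    R_n \;\lesssim\; \bar\beta_n \sqrt{c_H\, n\, \E\Big[\tsum_{k=1}^n \|\tilde\psi_k\|_{W_{k-1}^{-1}}^2\Big]} + \tilde O(\sqrt n).
\end{align}
The inner sum is exactly what the elliptical potential lemma controls for the SF sequence $\{\tilde\psi_k\}$, whose norm is bounded by $L/(1-\gamma)$ under A4, yielding $O(d \log(1 + nL^2/(\lambda d (1-\gamma)^2)))$. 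Combining these pieces gives $R_n = \tilde O(d\sqrt n)$.

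The main obstacle I anticipate is the mismatch between the confidence-set geometry (induced by $\phi$) and the natural UCB objective (built from $\psi$); this is handled by the chain $V^{-1} \preccurlyeq c_H W^{-1}$ from \cref{prop:Vn-Wn-inv-norm} combined with the conditional Jensen step, but requires care with filtration indexing since $\tilde\psi_k$ only becomes measurable at the end of episode $k$. Secondary technicalities include the clipping typically required in the elliptical potential lemma (introducing only constants in $L$, $\lambda$, and $1/(1-\gamma)$) and translating the high-probability $E$-conditional bound into a bound on the expected regret.
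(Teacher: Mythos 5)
Your proposal is correct and follows essentially the same route as the paper's proof: the same good-event decomposition with $\delta = 1/n$, the same optimism step, the same use of \cref{prop:Vn-Wn-inv-norm} to bridge the $\phi$-based confidence geometry with the $\psi$-based acquisition, and the same elliptical-potential argument applied to the empirical successor features $\tilde\psi_k$ with Gram matrices $W_k$, including the clipping constant. The only (immaterial) difference is that you apply Cauchy--Schwarz to the expected SF and then pass to $\tilde\psi_k$ via conditional Jensen, whereas the paper first replaces $\psi(s_{k,0},z_k)$ by $\Econd{\tilde\psi_k}{\filt_{k-1}}$ inside the inner product (using $\filt_{k-1}$-measurability of $\tilde z_k - z_r$) and only then applies Cauchy--Schwarz pointwise; both orderings yield the same final bound.
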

\begin{proof}
    We start off by investigating the event $\mathcal E$ that $z_r \in \Cspace_k\,\forall k \in \{1,\dots,n\}$, \ie, $z_r$ is in all confidence sets, which occurs with probability at least $1-\delta$ according to \cref{thm:betas}.

    Since we will refer to both empirical and "expected" SFs ($\tilde \psi$ and $\psi$, respectively), we introduce the natural filtration $\filt_{k-1}$ generated by the history up to the start of episode $k$, in particular, $s_{k,0},z_k,V_{k-1},W_{k-1}$ are $\filt_{k-1}$-measurable.
    Intuitively, conditioning on $\filt_{k-1}$ allows us to limit randomness to within the current episode.
    Crucial for us will be,
    \begin{align}
        \psi(s_{k,0}, z_k) = \Econd{\tilde \psi_k}{\filt_{k-1}}
    \end{align}
    
    We will proceed as in the standard regret bound for linear contextual bandits \citep{lattimoreBanditAlgorithms2020}
    Let $\tilde{z}_{k} = \argmax_{w\in\Cspace_{k}} \langle \psi(s_{k,0}, z_k), w\rangle$ be the task embedding that maximizes the UCB.
    We then have that
    \begin{align}\label{eq:ucb-bound}
        \langle \psi(s_{k,0}, z_r), z_r\rangle
        \le \text{UCB}_k(z_r)
        \le \text{UCB}_k(z_k)
        = \langle \psi(s_{k,0}, z_k), \tilde{z}_k\rangle.
    \end{align}
    We can therefore bound the instantaneous regret $e_k$ by
    \begin{align}\label{eq:instant-regret-bound}
        e_k 
        &= G_{k,z_r} - G_{k,z_k}
        \\&= \langle \psi(s_{k,0}, z_r), z_r \rangle - \langle \psi(s_{k,0}, z_k), z_r \rangle
        \\&\le \langle \psi(s_{k,0}, z_k), \tilde{z}_k\rangle - \langle \psi(s_{k,0}, z_k), z_r \rangle
        \\&= \langle \psi(s_{k,0}, z_k), \tilde{z}_k - z_r \rangle.
    \end{align}
    Conditioning on previous events ($\filt_{k-1}$), and applying the Cauchy-Schwarz inequality pointwise, we have
    \begin{align}
        \Econd{e_k}{\filt_{k-1}}
        &\le \Econd{\langle \psi(s_{k,0}, z_k), \tilde{z}_k - z_r \rangle}{\filt_{k-1}}
        \\&= \Econd{\langle \tilde\psi_k, \tilde{z}_k - z_r \rangle}{\filt_{k-1}}
        \\&\le \Econd{\norm{\tilde\psi_k}_{V_{k-1}^{-1}} \norm{\tilde{z}_k - z_r}_{V_{k-1}}}{\filt_{k-1}}.
    \end{align}
    Since we are in event $\mathcal E$ we have $z_r\in\Cspace_k$, furthermore, $\tilde{z}_k \in \Cspace_k$ by definition.
    So, using the definition of the confidence set \cref{eq:c2}, we have
    \begin{align}
        \norm{\tilde{z}_k - z_r}_{V_{k-1}} \le 2\beta_k \le 2\beta_n.
    \end{align}
    
    This is where this proof diverges from the standard regret bound: note that $V_{k-1}$ is a sum of features $\phi$ \textit{and not} empirical SFs $\tilde \psi$, making immediate application of the elliptical potential lemma non-trivial.
    We can however recall the comparison of least square estimation with reward-feedback and return-feedback from \cref{app:lr_on_features}.
    In particular, one can apply \cref{prop:Vn-Wn-inv-norm} to upper-bound the $V$-norm with the $W$-norm that depends on the empirical SFs:
    \begin{align}
        \norm{\tilde\psi_k}_{V_{k-1}^{-1}}
        \le \sqrt{c_H} \norm{\tilde\psi_k}_{W_{k-1}^{-1}}.
    \end{align}
    
    Combining these two bounds results in
    \begin{align}
        \Econd{e_k}{\filt_{k-1}}
        &\le \Econd{ 2\beta_n \sqrt{c_H} \norm{\tilde{\psi}_k}_{W_{k-1}^{-1}}}{\filt_{k-1}}.
    \end{align}
    
    Recalling that, by assumption A4, $\norm{\tilde\psi_k}_2 \le \frac{L(1-\gamma^H)}{1-\gamma}$ 
    and 
    \begin{align}
        V_{k-1} = \lambda I + \sum_{i=1}^k \sum_{t=0}^{H-1} \phi_{i,t} \phi_{i,t}\T \,\succcurlyeq\, \lambda I
    \end{align} 
    so $\norm{u}\le \lambda^{-1/2}\norm{u}_{V_{k-1}}$,
    we can write that, pointwise for all $k$,
    \begin{align}
        \left|\left\langle \tilde\psi_k,\, \tilde z_k - z_r \right\rangle\right|
        &\le \min\left\{ \frac{2\beta_n}{\sqrt{\lambda}}\norm{\tilde\psi_k},\; 2\beta_n \sqrt{c_H}\, \norm{\tilde\psi_k}_{W_{k-1}^{-1}} \right\} \\
        &\le 2\beta_n \min\left\{ \underbrace{\frac{L(1-\gamma^H)}{\sqrt{\lambda}(1-\gamma)}}_{=:B},\; \sqrt{c_H}\, \norm{\tilde\psi_k}_{W_{k-1}^{-1}} \right\}.
    \end{align}

    Combining the two bounds inside the expectation (as they hold pointwise), we obtain
    \begin{align}
        \Econd{e_k}{\filt_{k-1}}
        \le \Econd{2\beta_n \min\left\{ B, \sqrt{c_H} \norm{\tilde\psi_k}_{W_{k-1}^{-1}} \right\} }{\filt_{k-1}}.
    \end{align}
    Using the fact that $\min\{a, b x\} \le \max\{a,b\}\min\{1, x\}$ for $a, b > 0$, we can then take some constants out of the expectation:
    \begin{align}
        \Econd{e_k}{\filt_{k-1}}
        \le \underbrace{\max\{B, \sqrt{c_H}\}}_{=:\alpha}\, \Econd{2\beta_n \min\left\{ 1, \norm{\tilde\psi_k}_{W_{k-1}^{-1}} \right\} }{\filt_{k-1}}.
    \end{align}
    We now sum over episodes.
    To do this, we use the tower rule on $e_k$, \ie, $\E[e_k] = \E[\Econd{e_k}{\filt_{k-1}}]$.
    \begin{align}
        R_n 
        = \E\left[\sum_{k=1}^n e_k \right]
        = \sum_{k=1}^n \E\left[ e_k \right]
        = \sum_{k=1}^n \E\big[ \Econd{e_k}{\filt_{k-1}} \big]
    \end{align}
    Applying our (pointwise) bound on $\Econd{e_k}{\filt_{k-1}}$ and the tower rule again, we get.
    \begin{align}
        R_n = \sum_{k=1}^n \E\big[ \Econd{e_k}{\filt_{k-1}} \big]
        &\le 2\alpha \sum_{k=1}^n \E\left[ \Econd{\beta_n \min\left\{ 1, \norm{\tilde\psi_k}_{W_{k-1}^{-1}} \right\} }{\filt_{k-1}} \right]
        \\&= 2\alpha \sum_{k=1}^n \E\left[ \beta_n \min\left\{ 1, \norm{\tilde\psi_k}_{W_{k-1}^{-1}} \right\} \right].
    \end{align}
    Using linearity of expectation and then applying the Cauchy-Schwarz inequality over episodes (pointwise), we can prepare the sum to the application of the elliptical potential Lemma on $(\tilde{\psi}_k)_{k=1}^n$ and $(W_k)_{k=1}^n$ (pathwise):
    \begin{align}
        R_n
        &\le 2\alpha\, \E\left[ \beta_n \sum_{k=1}^n \min\left\{ 1, \norm{\tilde\psi_k}_{W_{k-1}^{-1}} \right\} \right]
        \\&\le 2\alpha\, \E\left[ \beta_n \sqrt{n\cdot \sum_{k=1}^n \min\left\{ 1, \norm{\tilde\psi_k}^2_{W_{k-1}^{-1}} \right\}} \right]
        \\&\le 2\alpha\, \E\left[ \beta_n \sqrt{ 2n\cdot \log\left(\frac{\det W_n}{\lambda^d}\right)} \right]. &\text{\cref{lem:elliptical-potential}}
    \end{align}
    Inserting our choice for $\beta_n$ we obtain
    \begin{align}
        R_n \le 2\alpha\, \E\left[ \left(\sqrt{\lambda}S + \sigma\left(\sqrt{\log\left(\frac{\det V_{n-1}}{\lambda^d}\right) + 2\log(1/\delta)}\right)\right) \sqrt{ 2n\cdot \log\left(\frac{\det W_n}{\lambda^d}\right)} \right]
    \end{align}
    
    We recall that, so far, we investigated the expected regret conditional on the event $\mathcal E$ that $z_r \in \Cspace_k\,\forall k \in \{1,\dots,n\}$, which occurs with probability at least $1-\delta$ according to \cref{thm:betas}.
    We now choose $\delta = 1/n$.
    Using \cref{prop:det} to upper-bound the determinants (pointwise), and recalling the fact that $\norm{\tilde\psi_k}_2 \le \frac{L(1-\gamma^H)}{1-\gamma}$ (by A4),
    we obtain
    \begin{align}\label{eq:full-regret-eq}
        R_n &\le 2\alpha \left(\sqrt{\lambda}S + \sigma\left(\sqrt{d\log\left(\frac{d\lambda + nHL^2}{d\lambda}\right) + 2\log n}\right)\right) \sqrt{ 2nd \log\left(\frac{d\lambda + n \frac{L^2(1-\gamma^H)^2}{(1-\gamma)^2}}{d\lambda}\right)}
        \nonumber \\&\le \tilde{\mathcal{O}}\left(d\sqrt{n}\right).
    \end{align}
    
    We finally investigate the complementary event $\mathcal{E}^C$.
    We remark that the discounted return in each episode can be bounded by constants using the Cauchy-Schwarz inequality:
    \begin{align}
        |G_{k,z}| 
        = \left| \Econd{\sum_{t=0}^{H-1} \gamma^t \phi_{k,t}\T z_r}{\pi_{z_k}, s_{k,0}} \right|
        \le LS \sum_{t=0}^{H-1} \gamma^t
        \le \frac{LS}{1-\gamma}.
    \end{align}
    It thus holds that $R_n \le n\cdot \frac{2LS}{1-\gamma}$ under $\mathcal{E}^C$.

    To combine the analysis for both events, let 
    $\hat{R}_n = \sum_{k=1}^n e_k$ be the empirical regret. By the law of total probability, we have that
    \begin{align}
        R_n = \E[\hat R_n]
        &= \Pr[\mathcal E]\cdot \Econd{\hat R_n}{\mathcal E} 
        + \Pr[\mathcal{E}^C]\cdot \Econd{\hat R_n }{\mathcal{E}^C}
        \\&\le 1 \cdot \tilde{\mathcal{O}}\left(d\sqrt{n}\right) + \frac{1}{n} \cdot n \cdot \frac{2LS}{1-\gamma}
        \\&\le \tilde{\mathcal{O}}\left(d\sqrt{n}\right),
    \end{align}
    which concludes the proof.
\end{proof}

\rebuttal{
\subsection{Regret Bound under Weaker Assumptions}
Our empirical evaluations are performed in a setting which violates two assumptions described in \ref{app:assumptions}: 
perfect USF estimation (A1) and linearity of rewards (A2). 
This section provides a study of the algorithm's behavior under these violations taking some ideas from existing theory of misspecified bandits \citep{ghosh2017misspecifiedlinearbandits,bogunovic2021misspecifiedgaussianprocessbandit,lattimoreBanditAlgorithms2020}.    

We start by replacing A1 and A2 by weaker assumptions that quantify the mismatch between $\phi$ and $\psi$ and between $r$ and its projection $z_r\T\psi$ respectively:
\begin{itemize}
    \item[(A1')] For every $(s,a,z) \in \mathcal{S \times A \times Z}$ we have
    \begin{align}
        \norm{\psi(s, a, z) - \tsum_{t=0}^{H-1} \gamma^t \Econd{\phi(s_t)}{\pi_z, s, a}}_2 \le \xi
    \end{align}
    and
    \begin{align}
        \pi_z(a|s) > 0 \implies a \in \argmax_{a\in\Aspace} \psi(s,a,z)\T z.
    \end{align}
    \item[(A2')] For reward $r$ we have for each state $s_{k,t}$
    \begin{align}
        \E[\|r(s_{k,t}) - \phi(s_{k,t})\T z_r\|] \le \zeta
    \end{align}
    where the expectation is over i.i.d.\ mean-zero $\sigma$-subgaussian noise $\eta_{k,t}$.
\end{itemize}

We may thus write $\psi$ as a biased version of the true $\psi^\star$
\begin{align}
    \psi(s, a, z) + \Delta_\psi(s,a,z) = \psi^*(s,a,z),
\end{align}
and, similarly, rewrite rewards as
\begin{align}
    r_{k,t} = \phi(s_{k,t}) \T z_r + \Delta_r(s_{k,t}) + \eta_{k,t}.
\end{align}

We can then establish a uniform-in-time bound as in \cref{thm:betas} by inflating $\beta$ to account for the newly introduced bias in the reward. Considering the distance between the current mean estimate $\hat{z}_{t-1}$ and $z_r$, we can decompose this into
\begin{align}
    \norm{z_r - \hat{z}_{t-1}}_{V_{t-1}}
    &= \norm{z_r - \sum_{i=1}^{t-1}\phi_i\phi_i\T z_r - V_{t-1}^{-1} \sum_{i=1}^{t-1}\phi_i \eta_i - V_{t-1}^{-1}\sum_{i=1}^{t-1}\phi_i \Delta_r(s_i)}_{V_{t-1}}
    \\&= \norm{z_r - V_{t-1}z_r + \lambda z_r - V_{t-1}^{-1} \sum_{i=1}^{t-1}\phi_i \eta_i - V_{t-1}^{-1}\sum_{i=1}^{t-1}\phi_i \Delta_r(s_i)}_{V_{t-1}}
    \\&= \norm{z_r - z_r - V_{t-1}^{-1} \left(\lambda z_r + \sum_{i=1}^{t-1}\phi_i \eta_i + \sum_{i=1}^{t-1}\phi_i \Delta_r(s_i) \right)}_{V_{t-1}}
    \\&= \norm{\lambda z_r + \sum_{i=1}^{t-1}\phi_i \eta_i + \sum_{i=1}^{t-1}\phi_i \Delta_r(s_i)}_{V_{t-1}^{-1}}
    \\&\le \norm{\lambda z_r + \sum_{i=1}^{t-1}\phi_i \eta_i}_{V_{t-1}^{-1}} + \norm{\sum_{i=1}^{t-1}\phi_i \Delta_r(s_i)}_{V_{t-1}^{-1}}
\end{align}
The first term is covered by \cref{thm:betas}, while the new bias term can be bounded through similar techniques to those used in the main proof:
\begin{align}
    \norm{\sum_{i=1}^{t-1} \phi_i \Delta_r(s_i)}_{V_{t-1}^{-1}}
    &\le \sum_{i=1}^{t-1}|\Delta_r(s_i)|\norm{\phi_i}_{V_{t-1}^{-1}}  &\norm{\cdot}_{V_{t-1}^{-1}} \text{ properties}
    \\&\le \zeta \sum_{i=1}^{t-1} \norm{\phi_i}_{V_{t-1}^{-1}} &\text{A2'}
    \\&\le \zeta \sum_{i=1}^{t-1} \norm{\phi_i}_{V_{i-1}^{-1}}  &V_{i-1} \preccurlyeq V_{t-1}
    \\&\le \zeta \sqrt{t\sum_{i=1}^{t-1} \norm{\phi_i}^2_{V_{i-1}^{-1}}}  &\text{Cauchy-Schwarz}
    \\&\le \zeta \sqrt{t\sum_{i=1}^{t-1} \min\{\tfrac{L^2}{\lambda}, \norm{\phi_i}^2_{V_{i-1}^{-1}}\}}
    \\&\le \frac{L\zeta}{\sqrt{\lambda}} \sqrt{t\sum_{i=1}^{t-1} \min\{1, \norm{\phi_i}^2_{V_{i-1}^{-1}}\}}
    \\&\le \frac{\sqrt{2}L\zeta}{\sqrt{\lambda}} \sqrt{t\log\left(\frac{\det V_t}{\det V_0}\right)} &\text{\cref{lem:elliptical-potential}}
\end{align}
As a result, we can establish a uniform-in-time guarantee if $\beta$ is inflated as 
\begin{align}
    \beta_t' = \beta_t + \frac{\sqrt{2}L\zeta}{\sqrt{\lambda}} \sqrt{t\log\left(\frac{\det V_t}{\det V_0}\right)}.
\end{align}

Continuing to build the regret bound, we first notice that the mismatch also impacts the discounted return:
\begin{align}
    G_{k,z} 
    &= \Econd{\hat{G}_k}{\pi_z, s_{k,0}}
    \\ &= \Econd{\tsum_{t=0}^{H-1} \gamma^t r_{k,t}}{\pi_z, s_{k,0}}
    \\ &\stackrel{A2'}{=} \Econd{\tsum_{t=0}^{H-1} \gamma^t \langle z_r, \phi(s_{k,t})\rangle + \Delta_r(s_{k,t})}{\pi_z, s_{k,0}}
    \\ &= \left\langle z_r, \Econd{\tsum_{t=0}^{H-1} \gamma^t \phi(s_{k,t})}{\pi_z, s_{k,0}} \right\rangle + \Econd{\tsum_{t=0}^{H-1} \gamma^t \Delta_r(s_{k,t})}{\pi_z, s_{k,0}}
    \\ &= \langle z_r, \psi^*(s_{k,0}, z)\rangle + \Econd{\tsum_{t=0}^{H-1} \gamma^t \Delta_r(s_{k,t})}{\pi_z, s_{k,0}}
    \\ &\stackrel{A2'}{\le} \langle z_r, \psi^*(s_{k,0}, z)\rangle + \tfrac{\zeta}{1-\gamma}
\end{align}

We can then proceed along the main proof in Appendix \ref{app:regret} as normal, substituting $\beta'$ for $\beta$ throughout.
Given the suboptimal $\psi$ \cref{eq:ucb-bound} now becomes
\begin{align}
    \langle \psi^\star(s_{k,0}, z_r), z_r\rangle
    \le \text{UCB}_k(z_r)
    \le \text{UCB}_k(z_k)
    = \langle \psi^\star(s_{k,0}, z_k), \tilde{z}_k\rangle.
\end{align}
Then, in the bound on the instantaneous regret (\cref{eq:instant-regret-bound}), we get an added term.
\begin{align}
    e_k 
    &= G_{k,z_r} - G_{k,z_k}
    \\&\le \langle \psi^\star(s_{k,0}, z_r), z_r \rangle - \langle \psi^\star(s_{k,0}, z_k), z_r \rangle + \tfrac{2\zeta}{1-\gamma}
    \\&\le \langle \psi^\star(s_{k,0}, z_k), \tilde{z}_k\rangle - \langle \psi^\star(s_{k,0}, z_k), z_r \rangle + \tfrac{2\zeta}{1-\gamma}
    \\&= \langle \psi^\star(s_{k,0}, z_k), \tilde{z}_k - z_r \rangle + \tfrac{2\zeta}{1-\gamma}
    \\&\stackrel{A2'}{=} \langle \psi(s_{k,0}, z_k), \tilde{z}_k - z_r \rangle + \langle \Delta_\psi(s_{k,0}, z_k), \tilde{z}_k - z_r \rangle + \tfrac{2\zeta}{1-\gamma}
    \\&\le \langle \psi(s_{k,0}, z_k), \tilde{z}_k - z_r \rangle + \norm{\Delta_\psi(s_{k,0}, z_k)}_{V_{k-1}^{-1}} \norm{\tilde{z}_k - z_r}_{V_{k-1}} + \tfrac{2\zeta}{1-\gamma} &\text{Cauchy-Schwarz}\nonumber
    \\&\le \langle \psi(s_{k,0}, z_k), \tilde{z}_k - z_r \rangle + \tfrac{\xi}{\sqrt{\lambda}} \cdot 2\beta_{k}' + \tfrac{2\zeta}{1-\gamma}.
\end{align}
Where we used
\begin{align}
    \norm{x}_{V_{k-1}^{-1}} 
    = \sqrt{x\T V_{k-1}^{-1} x}
    \le \sqrt{ \lambda_{\max}(V_{k-1}^{-1}) x\T x}
    \le \tfrac{1}{\sqrt{\lambda}} \norm{x}_2
\end{align}
in the last step.\footnote{\rebuttal{This approximation of $\lambda_{\max}(V_{k-1}^{-1}) \le 1/\lambda$ is rather crude and could be improved by investigating how the eigenvalues of $V_k$ evolve over time.}}

We can treat the added term $2\beta_{k}'\tfrac{\xi}{\sqrt{\lambda}} + \tfrac{2\zeta}{1-\gamma}$ separately throughout, bounding it by
\begin{align}
    2\beta_n' n\tfrac{\xi}{\sqrt{\lambda}} + \tfrac{n\zeta}{1-\gamma}
\end{align} %
after summing over the episodes.
As a result, inserting our new $\beta_n'$ in the regret bound, we obtain
\begin{align}\label{eq:full-regret-with-misspec-eq}
    R_n &\le 2\alpha \left(\sqrt{\lambda}S + \sigma\sqrt{d\log\left(\frac{d\lambda + nHL^2}{d\lambda}\right) + 2\log n} + \underbrace{\frac{\sqrt{2}L\zeta}{\sqrt{\lambda}}\sqrt{ndH\log\left(\frac{d\lambda + nHL^2}{d\lambda}\right)}}_\text{new} \right) \nonumber
    \\&\times \left(\sqrt{ 2nd \log\left(\frac{d\lambda + n \frac{L^2(1-\gamma^H)^2}{(1-\gamma)^2}}{d\lambda}\right)} + \underbrace{\tfrac{2\xi n}{\sqrt{\lambda}}}_\text{new} \right) \nonumber
    \\&+ \underbrace{\tfrac{n\zeta}{1-\gamma}}_\text{new}
\end{align}
Because of the new term appearing in $\beta'$ and the final constant term, the regret bound is no longer sublinear, but instead
\begin{align}
    R_n \le \tilde{\mathcal{O}}(d\sqrt{n}) + \zeta \tilde{\mathcal{O}}(nd) + \xi \tilde{\mathcal{O}}(n\sqrt{dn})
\end{align}
which is also the final regret when incorporating the event $\mathcal{E}^C$.
This bound indicates that, while the confidence set may shrink initially, the two mismatches may introduce an irreducible uncertainty in the size of the respective missmatches that cannot be resolved. 
Crucially, the suboptimality incurred is directly connected to the degree of mismatch considered in assumptions (A1') and (A2').
}

\subsection{Useful Properties}
\subsubsection{Determinants}
We list a few useful properties that are used in the proofs above
\begin{proposition}\label{prop:det}
    Let $\phi_1,\dots,\phi_n$ be a sequence of vectors with $\norm{\phi_t} \le L$ for all $t = 1,\dots, n$, and $V_t = \lambda I + \sum_{s=1}^t \phi_s \phi_s\T$.
    \begin{align}
        \log \left(\frac{\det V_n}{\lambda^d}\right) 
        \le d \log \left(\frac{d \lambda + n L^2}{d\lambda}\right)
    \end{align}
\end{proposition}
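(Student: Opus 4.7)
The plan is to exploit the classical AM-GM bound on the determinant via the trace, which is the standard route for this type of volumetric bound in linear-bandit analysis. Since $V_n$ is symmetric positive definite, its determinant equals the product of its eigenvalues, so bounding that product by a power of its arithmetic mean (\ie, by a power of $\tfrac{1}{d}\Tr(V_n)$) reduces the problem to a straightforward upper bound on the trace.

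Concretely, I would first write $\det V_n = \prod_{i=1}^d \mu_i$, where $\mu_1,\dots,\mu_d$ are the eigenvalues of $V_n$, and apply the AM-GM inequality to obtain
\begin{align}
    \det V_n \le \left( \frac{1}{d} \sum_{i=1}^d \mu_i \right)^{\!d} = \left( \frac{\Tr(V_n)}{d} \right)^{\!d}.
\end{align}
Next, I would compute the trace using linearity and the cyclic property:
\begin{align}
    \Tr(V_n) = \Tr(\lambda I) + \sum_{s=1}^n \Tr(\phi_s \phi_s^\top) = d\lambda + \sum_{s=1}^n \norm{\phi_s}_2^2 \le d\lambda + nL^2,
\end{align}
using the norm bound $\norm{\phi_s}_2 \le L$.

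Combining these two estimates gives $\det V_n \le \bigl( \tfrac{d\lambda + nL^2}{d} \bigr)^d$, from which dividing by $\lambda^d$ and taking the logarithm yields the claimed inequality. There is no real obstacle here: both steps are standard one-liners, and the result is essentially identical to Lemma~19.4 in \citet{lattimoreBanditAlgorithms2020}; the only thing to double-check is that AM-GM is being applied to the eigenvalues (which are positive because $V_n \succcurlyeq \lambda I \succ 0$ whenever $\lambda > 0$), so the geometric mean is well-defined and the inequality is valid.
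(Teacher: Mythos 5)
Your proposal is correct and follows exactly the same route as the paper's proof: AM--GM applied to the eigenvalues of $V_n$ to bound $\det V_n$ by $(\Tr(V_n)/d)^d$, followed by the trace bound $\Tr(V_n) \le d\lambda + nL^2$ from $\norm{\phi_s}_2 \le L$. Your additional remark that positive definiteness of $V_n$ justifies the AM--GM step is a small but welcome clarification that the paper leaves implicit.
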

\begin{proof}
    First, note that 
    \begin{align}
        \det (\lambda I) = \lambda^d \text{ and } \Tr (\lambda I) = d\lambda
    \end{align}
    For $V_n$ we have, using the AM-GM inequality,
    \begin{align}
        \det V_n = \prod_i \lambda_i \le \left(\frac{1}{d} \Tr V_n \right)^d \le \left(\frac{\Tr \lambda I + n L^2}{d}\right)^d
    \end{align}
    because $\norm{\phi} \le L$.
    Thus we have
    \begin{align}
        \log \left(\frac{\det V_n}{\lambda^d}\right) 
        \le d \log \left(\frac{d \lambda + n L^2}{d\lambda}\right)
    \end{align}
\end{proof}
\subsubsection{Elliptical Potential Lemma}
We here state a standard result, commonly referred to as the elliptical potential lemma, for completeness.
\begin{lemma}\label{lem:elliptical-potential}
    Assume $V_0 \succ 0$ be positive definite and let $\phi_1,\dots,\phi_n$ be a sequence of vectors with $\norm{\phi_t} \le L$ for all $t = 1,\dots, n$, and $V_t = V_0 + \sum_{s=1}^t \phi_s \phi_s\T$.
    \begin{align}
        \sum_{t = 1}^n \min\{1, \norm{\phi_i}^2_{V_{t-1}^{-1}}\} 
        \le 2 \log \left(\frac{\det V_n}{\det V_0}\right)
    \end{align}
\end{lemma}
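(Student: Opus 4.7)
The plan is a standard three-step argument: reduce the sum of potentials to a telescoping sum of log-determinants via a rank-one determinant identity, dominate each summand by a logarithm, and then telescope.

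First, I would establish the key rank-one identity: for any positive definite $V_{t-1}$ and vector $\phi_t$,
\begin{align}
    \det V_t \;=\; \det\bigl(V_{t-1} + \phi_t \phi_t\T\bigr) \;=\; \det V_{t-1} \cdot \bigl(1 + \norm{\phi_t}^2_{V_{t-1}^{-1}}\bigr).
\end{align}
This follows from factoring $V_t = V_{t-1}^{1/2}\bigl(I + V_{t-1}^{-1/2}\phi_t \phi_t\T V_{t-1}^{-1/2}\bigr)V_{t-1}^{1/2}$ and computing the determinant of the inner rank-one perturbation of the identity via the matrix determinant lemma (or directly via the eigenvalue structure of $I + uu\T$).

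Second, I would invoke the elementary scalar inequality $\min\{1,x\} \le 2\log(1+x)$ for all $x \ge 0$. On $x \in [0,1]$ this reduces to $x \le 2\log(1+x)$, which holds since both sides vanish at zero and the right-hand side has derivative $\tfrac{2}{1+x} \ge 1$ on that interval; for $x > 1$ one uses $2\log(1+x) > 2\log 2 > 1$. Applying this with $x = \norm{\phi_t}^2_{V_{t-1}^{-1}}$ yields
\begin{align}
    \min\bigl\{1, \norm{\phi_t}^2_{V_{t-1}^{-1}}\bigr\} \;\le\; 2\log\bigl(1 + \norm{\phi_t}^2_{V_{t-1}^{-1}}\bigr) \;=\; 2\log\!\left(\frac{\det V_t}{\det V_{t-1}}\right).
\end{align}

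Third, I would sum over $t=1,\dots,n$; the right-hand side telescopes to $2\log(\det V_n / \det V_0)$, which is exactly the claimed bound. No obstacle is anticipated here since each step is essentially algebraic; the only mild subtlety is keeping track of the fact that $V_0 \succ 0$ is needed to ensure all inverses and determinants in the telescoping are well-defined, and that the boundedness assumption $\norm{\phi_t} \le L$ is actually not used in the inequality as stated (it is typically used in a companion bound on $\det V_n/\det V_0$, such as \cref{prop:det}, but the lemma itself holds without it). I would note this briefly in passing.
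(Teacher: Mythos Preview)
Your proposal is correct and essentially identical to the paper's proof: both use the scalar inequality $\min\{1,x\}\le 2\log(1+x)$, the rank-one determinant identity via the factorization $V_t = V_{t-1}^{1/2}(I + V_{t-1}^{-1/2}\phi_t\phi_t\T V_{t-1}^{-1/2})V_{t-1}^{1/2}$, and then telescope. Your remark that the hypothesis $\norm{\phi_t}\le L$ is not actually used in the lemma is also correct and worth keeping.
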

\begin{proof}
    First, note that we have for $u \ge 0$
    \begin{align}
        \min\{1, u\} \le 2\log(1 + u)
    \end{align}
    and use it to get
    \begin{align}
        \sum_{t = 1}^n \min\{1, \norm{\phi_t}^2_{V_{t-1}^{-1}}\}
        \le 2 \sum_{t = 1}^n \log \left(1 + \norm{\phi_t}^2_{V_{t-1}^{-1}} \right)
    \end{align}
    
    Then, notice that for $t \ge 1$ we have
    \begin{align}
        V_t 
        = V_{t - 1} + \phi_t\phi_t\T 
        = V_{t - 1}^{1/2}\left(I + V_{t - 1}^{-1/2}\phi_t \phi_t\T V_{t - 1}^{-1/2}\right)V_{t - 1}^{1/2}
    \end{align}  
    so, taking the determinant on both sides we get
    \begin{align}
        \det V_t = \det V_{t - 1} \cdot \det\left(I + V_{t - 1}^{-1/2}\phi_t \phi_t\T V_{t - 1}^{-1/2}\right) = \det V_{t - 1} \left(1 + \norm{\phi_t}^2_{V_{t-1}^{-1}}\right)
    \end{align}
    where we used $\det(A + B) = \det A \cdot \det B$ and $\det(I + uu\T) = 1 + \norm{u}^2$.
    Telescoping the previous result, we get
    \begin{align}
        \det V_n
        = \det V_0 \cdot \prod_{t=1}^n \left(1 + \norm{\phi_t}^2_{V_{t-1}^{-1}}\right)
    \end{align}
    which implies
    \begin{align}
        \sum_{t=1}^n \log \left(1 + \norm{\phi_t}^2_{V_{t-1}^{-1}}\right) = \log \left(\frac{\det V_n}{\det V_0}\right)
    \end{align}
    which proves the claim.
\end{proof}

\subsubsection{UCB Optimisation}
We here state the derivation of a well-known property of LinUCB for completeness.
\begin{remark}\label{rem:ucb}
    We want to show that 
    \begin{align}
        \argmax_{z \in \Cspace_t} \max_{w \in \Cspace_t} \, w\T\psi(s_t, z)
        = \argmax_{z \in \Cspace_t} \,\psi(s_t, z) \T\hat{z}_{t-1} + \beta_t \norm{\psi(s_t, z)}_{V_{t-1}^{-1}} 
    \end{align}
    Focusing on the inner $\max$ term, we have
    \begin{align}
        \max_{w \in \Cspace_t} w\T \psi(s_t, z)
        &= \max_{\norm{w - \hat{z}_t}_{V_{t-1}} \le \beta_t} w\T \psi(s_t, z) 
        \\&= \hat{z}_\rebuttal{t-1}\T \rebuttal{\psi(s_t, z)} + \max_{\norm{u}_{V_{t-1}} \le \beta_t} u\T \psi(s_t, z)
        \\&\le \hat{z}_\rebuttal{t-1}\T \psi(s_t, z) + \max_{\norm{u}_{V_{t-1}} \le \beta_t} \norm{u}_{V_{t-1}} \norm{\psi(s_t, z)}_{V_{t-1}^{-1}}
        \\&= \hat{z}_\rebuttal{t-1}\T \psi(s_t, z) + \beta_t \norm{\psi(s_t, z)}_{V_{t-1}^{-1}}
    \end{align}
    where the equality is attained by
    \begin{align}
        u^\star = \beta_t \frac{V_{t-1}^{-1} \psi(s_t, z)}{\norm{\psi(s_t, z)}_{V_{t-1}^{-1}}}
    \end{align}
\end{remark}

\section{Additional Experiments}\label{sec:appendix-exp}

\subsection{How much compute does \method need?}\label{sec:compute}
To estimate the compute cost of \method, we measure the time for the action selection and update steps on an Nvidia RTX 4090, skipping the first few calls to avoid measuring JIT compilation time. \Cref{tab:times} shows that \method and \method-TS are about 5x and 4x slower than running just the policy (Oracle) respectively.
\begin{table}[H]
    \centering
    \caption{Computational cost of \method compared to running just the policy on an Nvidia RTX 4090 GPU. \method is about 5x and \method-TS about 4x slower.}
    \label{tab:times}
    \begin{tabular}{l|ccc}
        \toprule
         & Oracle ($\pi_{z_r}$) & \method & \method-TS \\
        \midrule
        Time per Step & 0.772 ms & 3.567 ms & 2.756 ms \\
        Frequency & 1386 Hz & 280 Hz &  363 Hz \\
        \bottomrule
    \end{tabular}
\end{table}

\subsection{Requesting Reward Labels Explicitly}\label{sec:kappa}
We have so far assumed that each environment interaction provides a reward label, and is thus synonymous with the labeling cost.
One could also consider a setting where the agent can decide whether it wants to observe the reward label in each transition.
A very simple approach in this setting is to threshold the D-gap \citep{Kiefer1960TheEO,lattimoreBanditAlgorithms2020}, which describes how much the confidence ellipsoid shrinks for a new $\phi_t$.
\begin{align}
    \Delta(\phi_t) 
    = \log \det (V_{t-1} + \phi_t \phi_t\T) - \log \det V_{t-1}
    = \log \left(1 + \norm{\phi_t}^2_{V^{-1}_{t-1}}\right)
\end{align}
This approach can be directly applied to \method by introducing a hyper-parameter $\kappa$ and only requesting $r_t$ if $\Delta(\phi_t) \ge \kappa$.
We evaluate labeling efficiency for this variant: for each labeling budget $n$ (\# Samples), we measure average rewards at the time step $t$ in which the budget is exhausted, \ie $\tfrac{1}{t} \sum_{i=1}^t r_t$. This is presented in \cref{fig:kappa-both-cheetah} (see also \cref{fig:kappa-ucb} and \cref{fig:kappa-ts} for per-task results).
We observe that $\kappa$ trades-off environment interaction and labeling cost: for higher $\kappa$, good performance is achieved with fewer labels, but more episodes are required to request the same number of labels. 
Given a specific cost for labels, a slightly more involved thresholding scheme \citep{tuckerBanditsCostlyReward2023} could potentially be used to decide the threshold apriori.
Surprisingly, this variant of \method can maintain the performance of the main algorithm, especially in easier tasks, while reducing the amount of labeled data by one order of magnitude.

\begin{figure}
    \centering
    \includegraphics[width=1.0\linewidth]{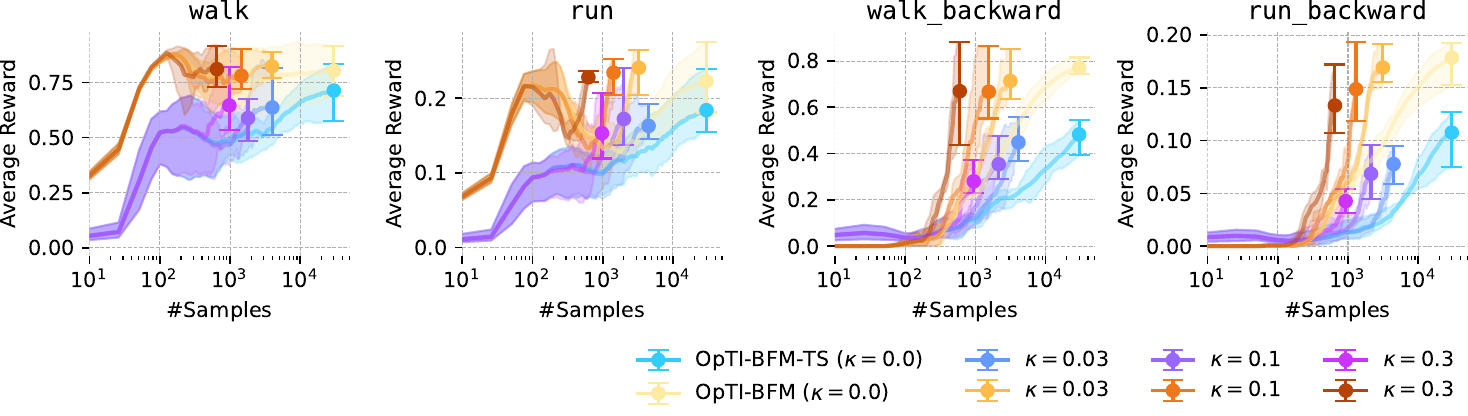}
    \caption{
        Average reward in Cheetah at different numbers of requested reward labels (\# Samples) of our main methods for different information thresholds $\kappa$.
        We stop interaction after 30k environment steps.
        $\kappa$ trades-off interaction cost with labeling cost.
        More than one order of magnitude less reward labels can still result in the same performance in easier tasks!
        We report per method results of all environments in \cref{fig:kappa-ts} and \cref{fig:kappa-ucb}
    }
    \label{fig:kappa-both-cheetah}
\end{figure}

\rebuttal{
\subsection{First-order Optimization of the UCB Objective}

This section introduces a variant of \method which uses gradient ascent to optimize the UCB objective:
\begin{align}
    \argmax_{z \in \Zspace} \,\psi(s_t, z) \T\hat{z}_{t-1} + \beta_t \norm{\psi(s_t, z)}_{V_{t-1}^{-1}},
\end{align}
as opposed to the random shooting approach we describe in \cref{sec:practice}.
We validate this variant empirically on \texttt{walker} tasks, relying on the Adam \citep{kingma2017adammethodstochasticoptimization} optimizer with a learning rate of $0.1$ to perform $2$, $4$, $8$, or $16$ gradient steps in each step, and maintaining the optimizer state throughout.
At each step, we utilize $\hat{z}_{t-1}$ to warm-start the optimization procedure.
We report overall return curves in \cref{fig:gd}, and an ablation of compute costs in \cref{fig:abl}.
We observe that, given a sufficient number of gradient steps, this variant approaches the main instantiation of our method in performance.
However, compute requirements for gradient ascent scale linearly with the number of gradient steps, and surpass those of random shooting. Considering that state embeddings are, for most BFMs, relatively low-dimensional (\eg $d=50$), we conclude that random shooting approach should be preferred, as it is also less prone to fall into local optima. For very high-dimensional task spaces, first-order optimization remains however a promising option.

\begin{figure}
    \centering
    \includegraphics[width=.75\linewidth]{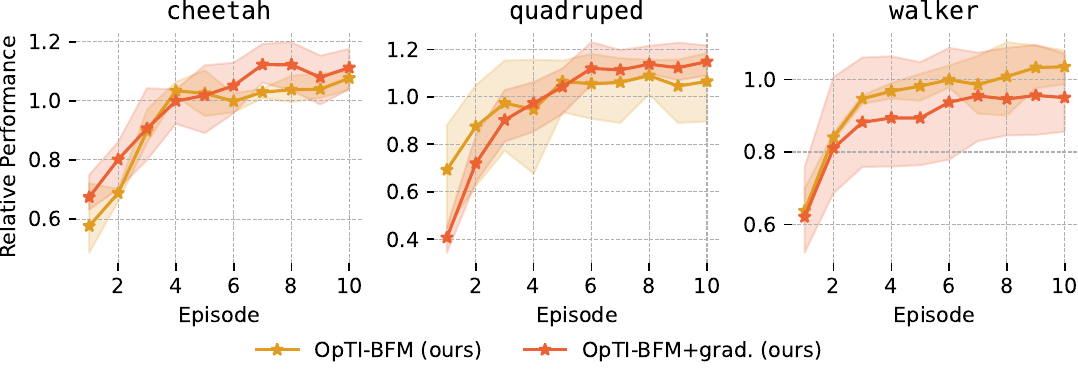}
    \vspace{-0.8em}
    \caption{\rebuttal{Relative performance after different \# of environment interactions of \method and \method+grad; the latter is a variant that optimizes the USB objective through $8$ gradient steps.
    Performances are similar; we report per-task results in \cref{fig:gd-ext} in \cref{sec:more-exp}.}
    }\label{fig:gd}
    \vspace{-1.5em}
\end{figure}

\begin{figure}
    \centering
    \begin{subfigure}{\linewidth}
        \includegraphics[width=\linewidth]{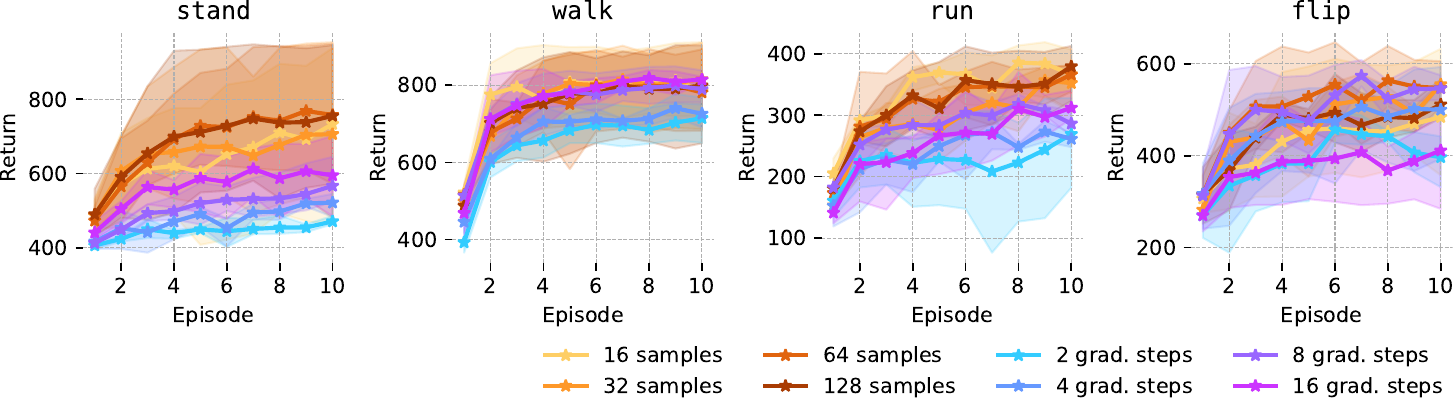}
        \vspace{-0.8em}
        \caption{\rebuttal{Absolute performance for different \# of samples or gradient steps.}}
    \end{subfigure}
    \begin{subfigure}{\linewidth}
        \centering
        \vspace{0.8em}
        \includegraphics[width=.5\linewidth]{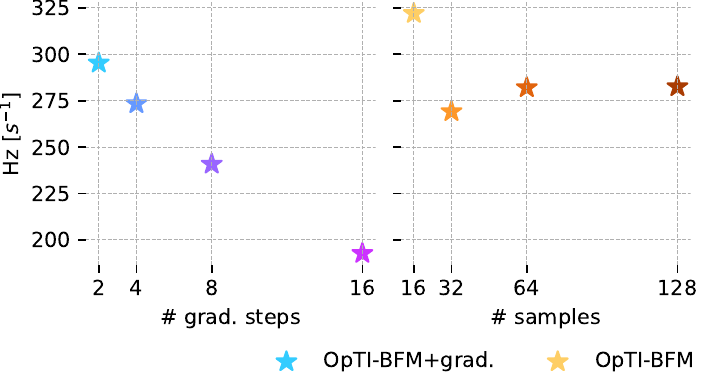}
        \caption{\rebuttal{Inference speed in Hz for different \# of samples or gradient steps.}}
    \end{subfigure}
    \caption{\rebuttal{
        Performance and inference speed of \method, and a gradient-based variant (\method+grad). The performance gap is moderate, but the computational cost of the gradient-based variant is generally higher.
    }}
    \label{fig:abl}
    \vspace{-1.5em}
\end{figure}
}

\rebuttal{
\subsection{\method for Loco-navigation}

To evaluate effectiveness beyond the locomotion experiments presented in the main text, we evaluate \method in \texttt{antmaze-medium-navigate-v0}, from the OGBench \citep{parkOGBenchBenchmarkingOffline2024} benchmark.
By default, this environment defines an indicator function reward that is 1 if the piloted quadruped reaches a certain goal position, at which point the episode terminates immediately.
To make this environment consistent with our theory, we change the task to ``holding'' the goal position: the episode does not terminate until 1000 steps are accumulated.
Furthermore, we slightly increase the radius of the sparse reward to cover the maze cell containing the goal, as seen in \cref{fig:maze-example} in green, as reaching the goal exactly is otherwise challenging during online learning.
As is common in this benchmark \citep{parkOGBenchBenchmarkingOffline2024}, we introduce a behavior cloning coefficient of $0.01$ to the actor loss in pre-training, as the pre-training data contains near-expert navigation behavior as opposed to the high-coverage exploration data in ExORL.
Results shown in \cref{fig:maze} are consistent with our main evaluation: \method can reach near-Oracle performance within a handful of episodes.

\begin{figure}
    \centering
    \begin{subfigure}{\linewidth}
        \includegraphics[width=\linewidth]{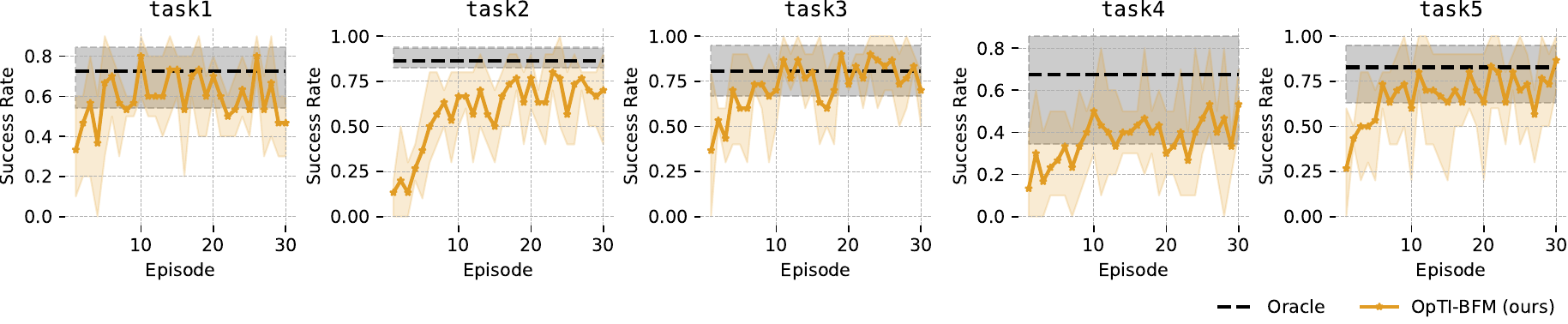}
        \vspace{-1.5em}
        \caption{\rebuttal{Success Rate over 30 episodes.}}
    \end{subfigure}
    \begin{subfigure}{\linewidth}
        \centering
        \vspace{0.8em}
        \includegraphics[width=\linewidth]{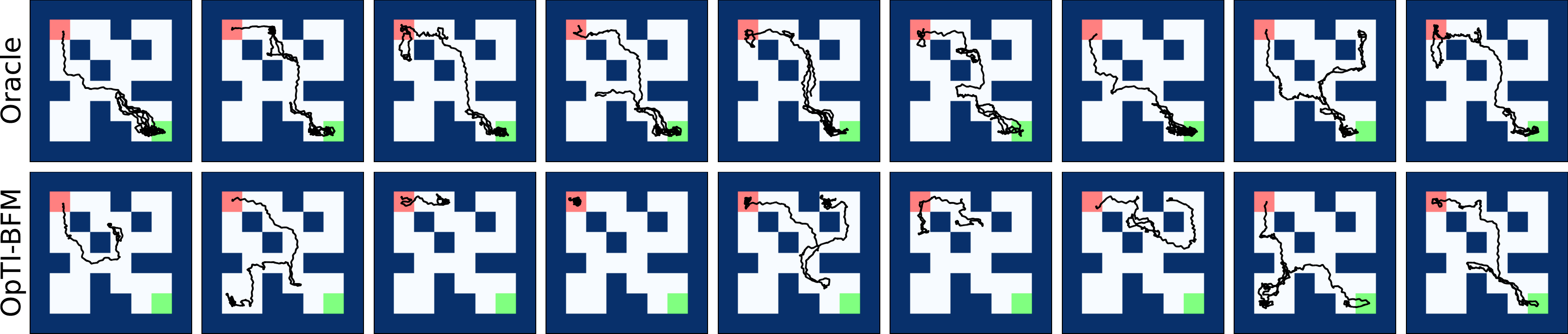}
        \caption{\rebuttal{The first 9 trajectories (left-to-right) of Oracle and \method in Task 2. The initial state is in the top left (red) and the goal is to reach and stay in the cell in the bottom right (green), which assigns a reward of 1 at each step. \method explores the whole maze before finally exploiting the reward it has seen in the green cell. We attribute any perceived exploration behavior of the Oracle to the high behavior cloning coefficient that is necessary to train policies that perform well in this setting.}}
        \label{fig:maze-example}
    \end{subfigure}
    \caption{\rebuttal{\method can infer a goal-reaching task in the OGBench \citep{parkOGBenchBenchmarkingOffline2024} loconavigation environent \texttt{antmaze-medium-navigate-v0}.}}
    \label{fig:maze}
    \vspace{-.5em}
\end{figure}
}

\rebuttal{
\subsection{\method with Inaccurate Successor Features}
To investigate how robust \method is to violations of assuption A1, we evaluate its performance when introducing a systematic mismatch between the discounted sum of $\phi$ and $\psi$. Concretely, we let \method use 
\begin{align}\label{eq:miss}
    \psi'(s_t, z) = \psi(s_t, z) + \alpha \cdot \text{MLP}(z; \theta)
\end{align}
where $\text{MLP}$ is a two layer network with ReLU activation and L2-normalized outputs, and $\theta$ are randomly sampled weights, which we resample at the start of task inference.
This additional MLP introduces a systematic, $z$-dependent bias, whose magnitude may be controlled through the hyperparameter $\alpha$.
To avoid confounding factors as much as possible, we adopt an optimization procedure that samples $z$ from the whole $\Zspace$ to optimize the UCB objective, increasing the number of samples to $n=512$.
As expected, \cref{fig:miss} shows that performance deteriorates as we increase $\alpha$; very large values of $\alpha$ ($\approx 1000$) are necessary to render \method completely uninformative, and approach performance of the random baseline. As the norm of $\psi$ is generally bounded by $\frac{1}{1-\gamma}\sqrt{d}\approx350$, these constitute significant perturbations.

\begin{figure}
    \centering
    \vspace{-.5em}
    \includegraphics[width=\linewidth]{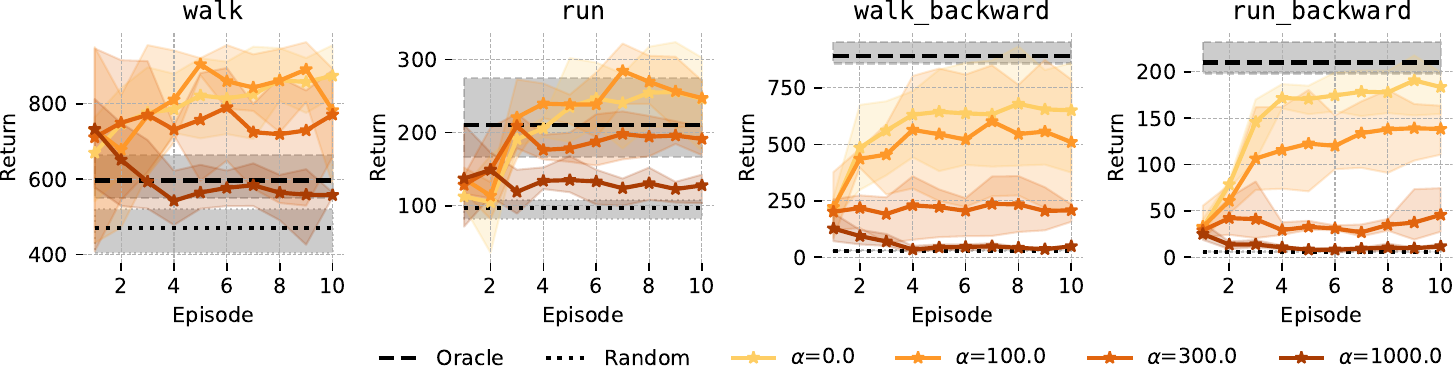}
    \vspace{-1.8em}
    \caption{\rebuttal{Performance of \method over 10 episodes in the Cheetah environment with a mismatch between $\psi$ and $\phi$ (see \cref{eq:miss}) of magnitude $\alpha$. Performance deteriorates as $\alpha$ increases.}}
    \label{fig:miss}
    \vspace{-1em}
\end{figure}

}

\rebuttal{
\subsection{\method with Non-linear Rewards}
This section investigates how a violation of assumption A2, i.e. linearity of rewards in the features $\phi$, impacts performance of \method.
In order to isolate this effect as much as possible from other factors (e.g., how hard the task specified by a reward function is), we consider a family of reward functions with increasingly larger orthogonal components to the feature space. Starting from an existing reward function $r(s)$, we project it to the feature space, obtaining $z_r$, and then extract its orthogonal component $e(s) = r(s) - \phi(s)^\top z_r$ by training a small network. We can then control linearity of rewards and investigate performance under the reward function
\begin{align}
    r_\alpha(s) = z_r\T\phi(s) + \alpha \cdot e(s).
\end{align}
Note that $r_0(s) = z_r\T\phi(s)$ which has $0$ projection error, and $r_1(s)$ recovers $r(s)$.
While this technique can approximately disentangle reward components, for $\alpha > 1$ the reward function might increase in scale: episodic performance might thus actually grow with projection error.
Furthermore, the function approximation of $e$ might be inaccurate, meaning that $\alpha$ does not allow directly tuning the projection error.
We can nonetheless compare the return of \method (upon convergence, i.e. at its 10th episode) with that of the Oracle, over a range of mean absolute projection errors, as \cref{fig:reward}.
While the results are increasingly noisy, as the error increases far beyond ranges encountered for existing rewards, \method seems to suffer slightly more in specific tasks---generally achieving worse performance compared to the Oracle for similarly misaligned reward functions. This may be explained by the fact that \method relies on linearity for both data collection and task inference, while the Oracle relies on independently collected data.
}
\begin{figure}
    \centering
    \includegraphics[width=\linewidth]{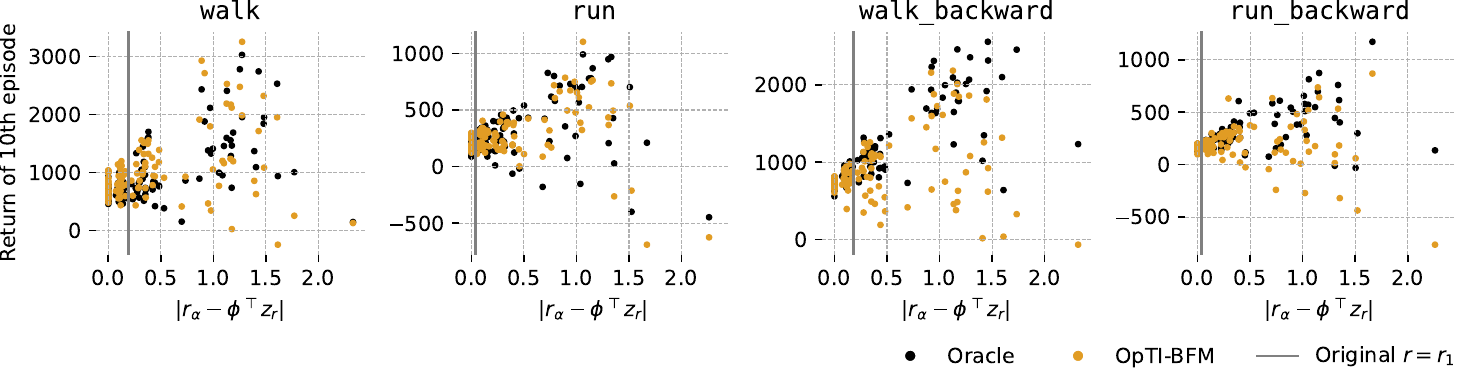}
    \vspace{-1.5em}
    \caption{\rebuttal{Return of the 10th episode of Oracle and \method in the Cheetah environment when artificially increasing or decreasing the projection error of the reward function onto $\phi$ through a learned network.}}
    \label{fig:reward}
    \vspace{-.5em}
\end{figure}

\rebuttal{
\subsection{Can \method learn from noisy rewards?}
In this section we evaluate the performance of \method under noisy reward feedback.
To this end, we add zero-mean Gaussian noise to the observed rewards with different standard deviations $\sigma$.
As predicted by \cref{eq:full-regret-eq}, we see in \cref{fig:noise} that convergence is slower for higher standard deviations.
Adjusting the hyper-parameter $\beta=10$ for $\sigma=10$ did improve performance slightly, but with a noise level that is 10x higher than the reward of the environment $r_t \in [-1,1]$ convergence remains slow.
}
\begin{figure}
    \centering
    \includegraphics[width=\linewidth]{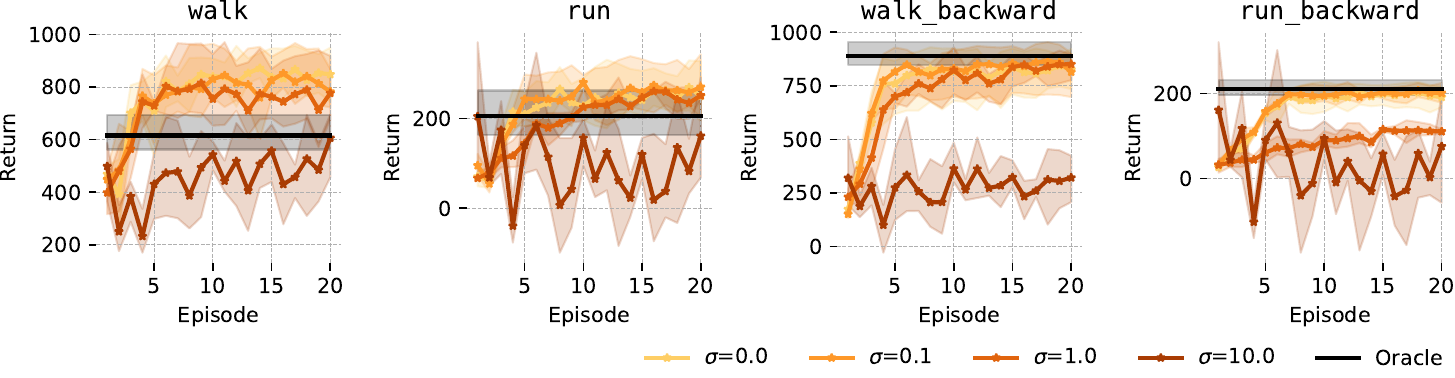}
    \vspace{-1.5em}
    \caption{\rebuttal{Performance of \method over 10 episodes in the Cheetah environment when adding Gaussian noise with different standard deviations to the rewards it observes. Convergence slows down as the noise is increased.}}
    \label{fig:noise}
    \vspace{-.5em}
\end{figure}

\section{Full Experimental Results}\label{sec:more-exp}
\begin{figure}[H]
    \vspace{-1em}
    \centering
    \begin{subfigure}{\linewidth}
        \includegraphics[width=\linewidth]{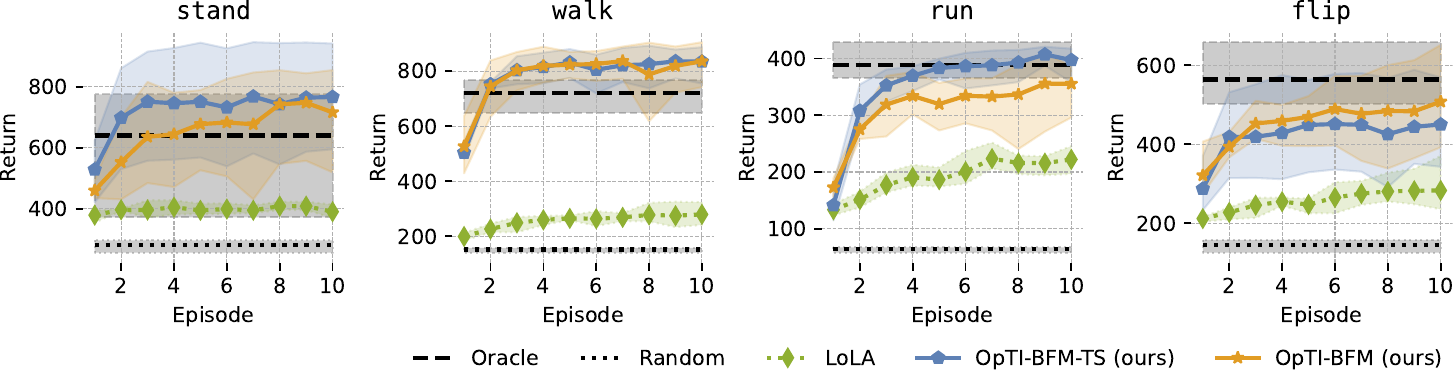}
        \caption{DMC Walker}
    \end{subfigure}\hfill
    \begin{subfigure}{\linewidth}
        \includegraphics[width=\linewidth]{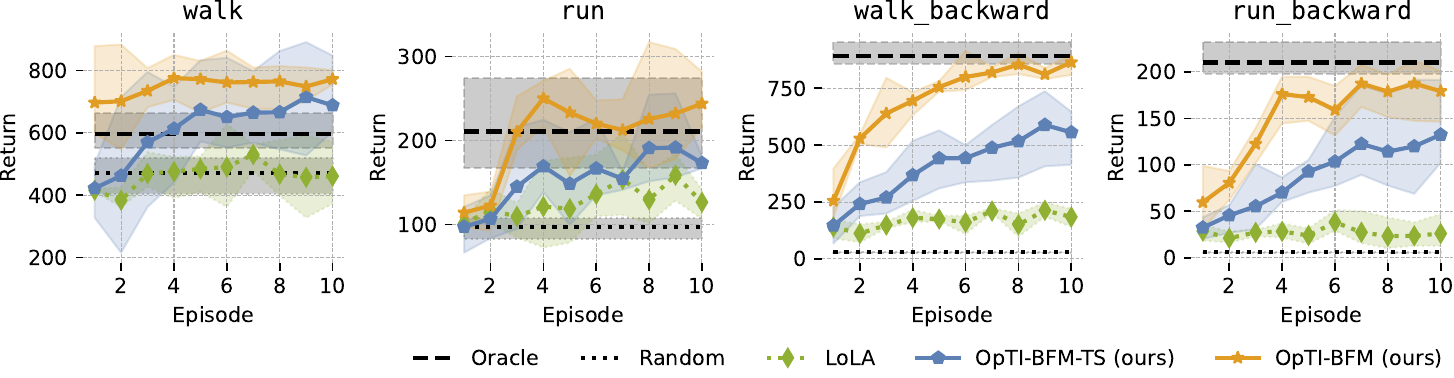}
        \caption{DMC Cheetah}
    \end{subfigure}\hfill
    \begin{subfigure}{\linewidth}
        \includegraphics[width=\linewidth]{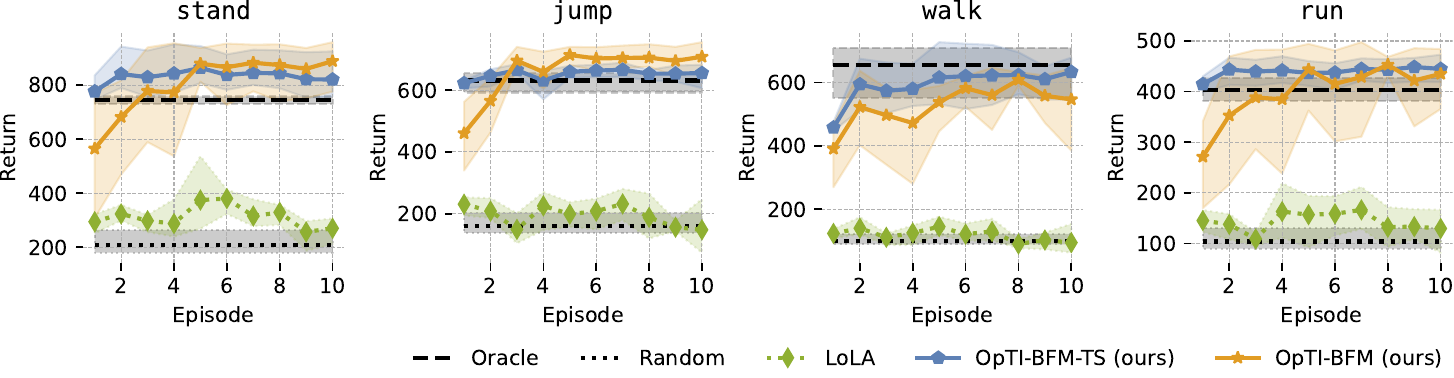}
        \caption{DMC Quadruped}
    \end{subfigure}
    \vspace{-1.5em}
    \caption{Episode Return over 10 episodes (10k steps) of interaction in DMC.}\label{fig:main-ext}
    \vspace{-1em}
\end{figure}

\begin{figure}
    \vspace{-1em}
    \begin{subfigure}{\linewidth}
        \includegraphics[width=\linewidth]{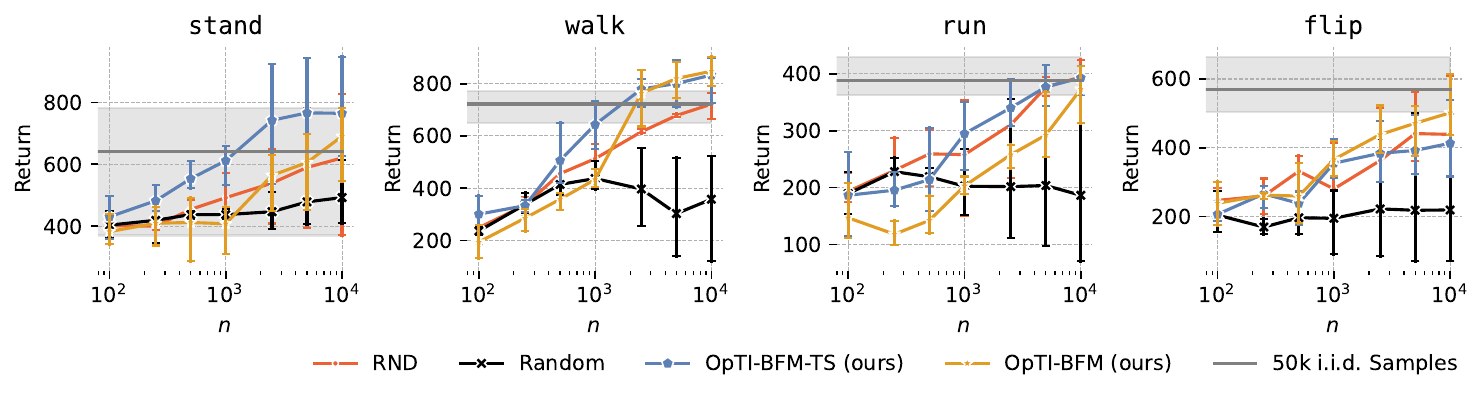}
        \caption{DMC Walker}
    \end{subfigure}
    \begin{subfigure}{\linewidth}
        \includegraphics[width=\linewidth]{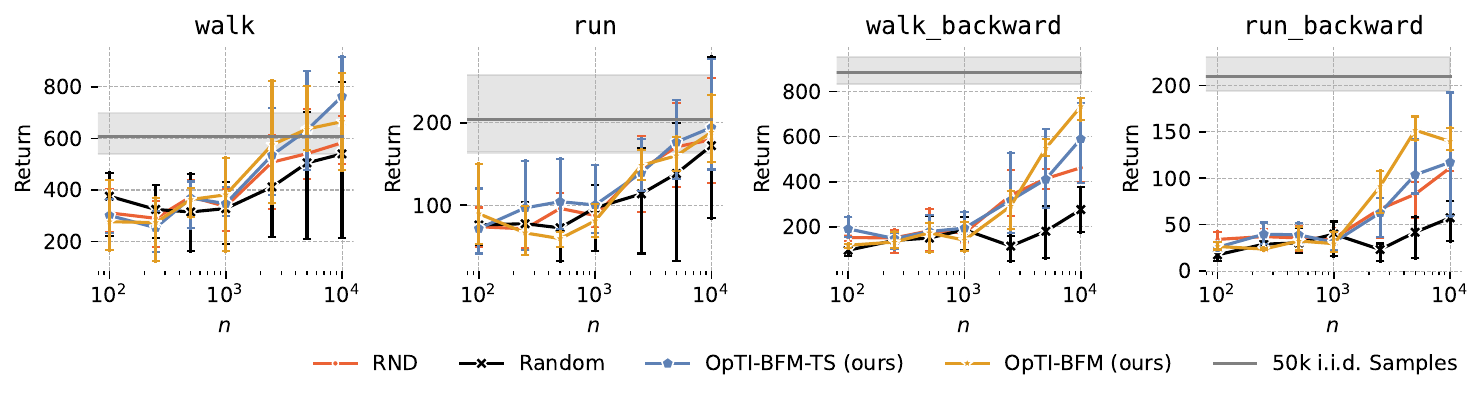}
        \caption{DMC Cheetah}
    \end{subfigure}
    \begin{subfigure}{\linewidth}
        \includegraphics[width=\linewidth]{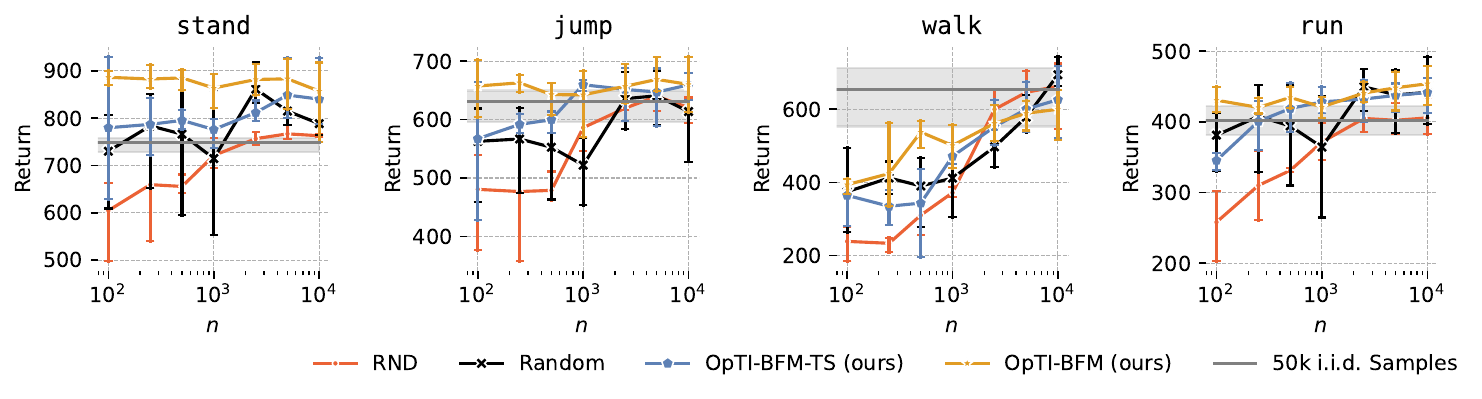}
        \caption{DMC Quadruped}
    \end{subfigure}
    \vspace{-1.5em}
    \caption{Per task results of \cref{fig:data}. We show absolute performance here and include the Oracle performance (gray line).}\label{fig:data-ext}
    \vspace{-1em}
\end{figure}

\begin{figure}
    \vspace{-1em}
    \centering
    \begin{subfigure}{\linewidth}
        \includegraphics[width=\linewidth]{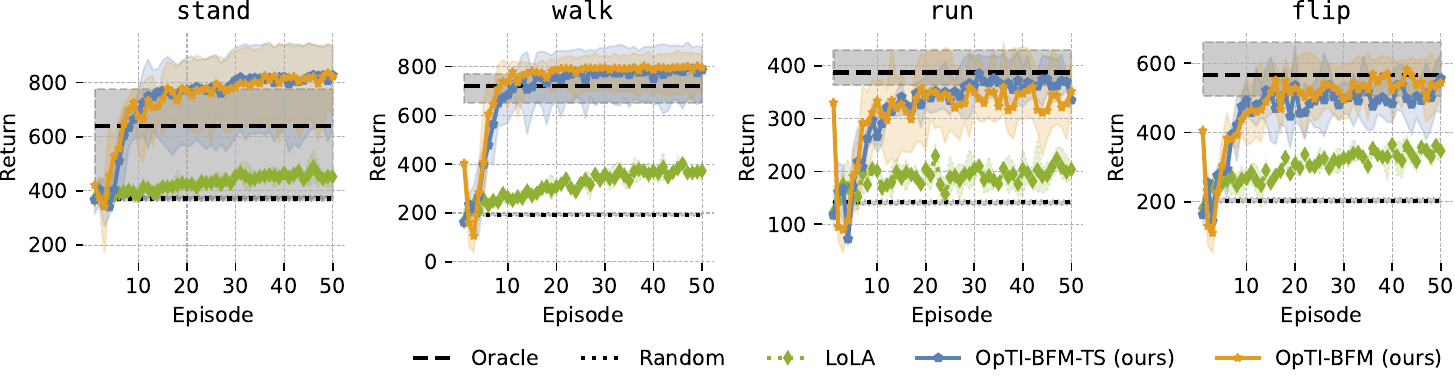}
        \caption{DMC Walker}
    \end{subfigure}\hfill
    \begin{subfigure}{\linewidth}
        \includegraphics[width=\linewidth]{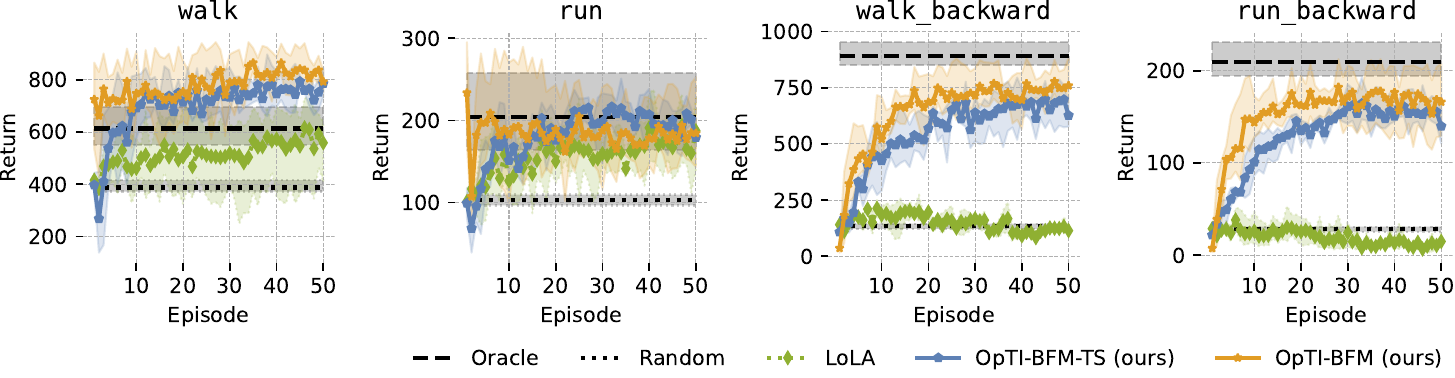}
        \caption{DMC Cheetah}
    \end{subfigure}\hfill
    \begin{subfigure}{\linewidth}
        \includegraphics[width=\linewidth]{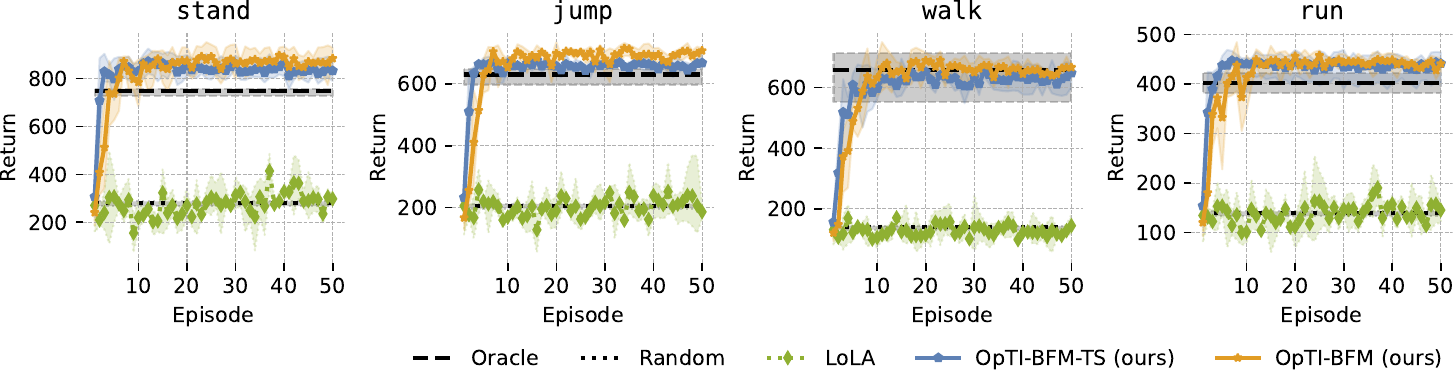}
        \caption{DMC Quadruped}
    \end{subfigure}
    \vspace{-1.5em}
    \caption{
    Episode Return over 50 episodes (50k steps) of interaction in DMC when only allowing switching of the task embedding at the start of episodes.
    Note that LoLA resamples its task embedding every 50-250 steps (hyper-parameter).
    }\label{fig:ep}
    \vspace{-1em}
\end{figure}

\begin{figure}
    \vspace{-1em}
    \centering
    \begin{subfigure}{\linewidth}
        \includegraphics[width=\linewidth]{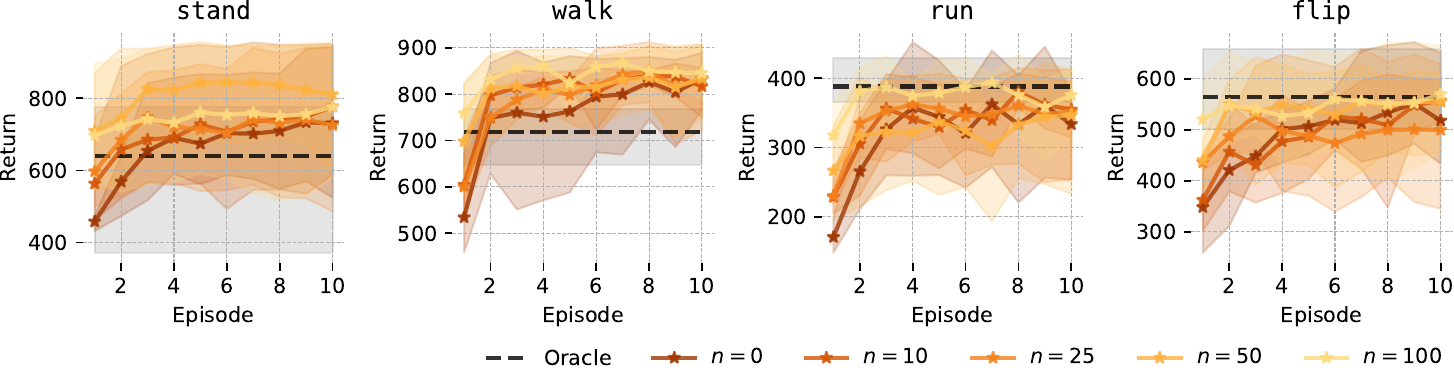}
        \caption{DMC Walker}
    \end{subfigure}\hfill
    \begin{subfigure}{\linewidth}
        \includegraphics[width=\linewidth]{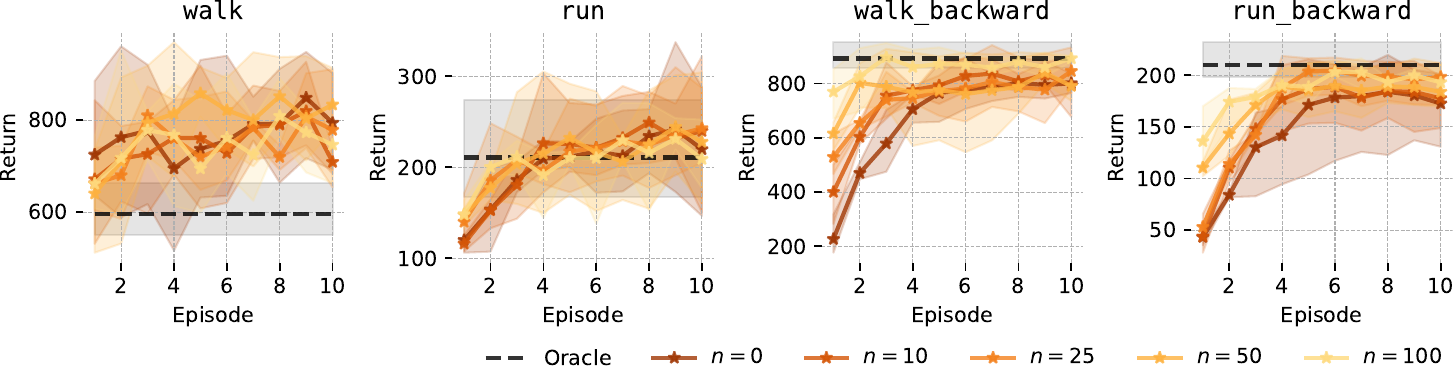}
        \caption{DMC Cheetah}
    \end{subfigure}\hfill
    \begin{subfigure}{\linewidth}
        \includegraphics[width=\linewidth]{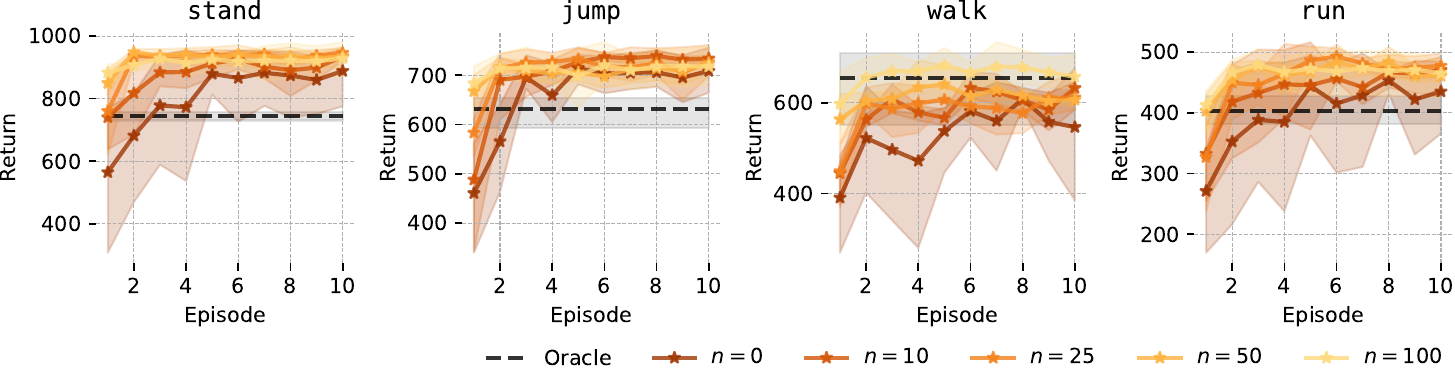}
        \caption{DMC Quadruped}
    \end{subfigure}
    \vspace{-1.5em}
    \caption{Return of \method when warm-starting with $n$ i.i.d. labeled states from the pre-training dataset.}\label{fig:boot-full}
    \vspace{-1em}
\end{figure}

\begin{figure}
    \vspace{-1em}
    \centering
    \begin{subfigure}{\linewidth}
        \includegraphics[width=\linewidth]{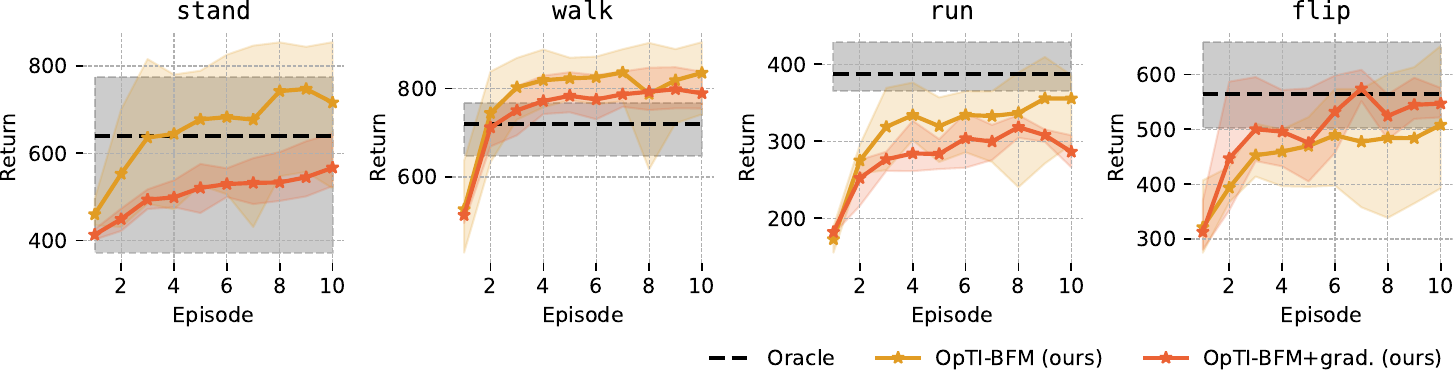}
        \caption{DMC Walker}
    \end{subfigure}\hfill
    \begin{subfigure}{\linewidth}
        \includegraphics[width=\linewidth]{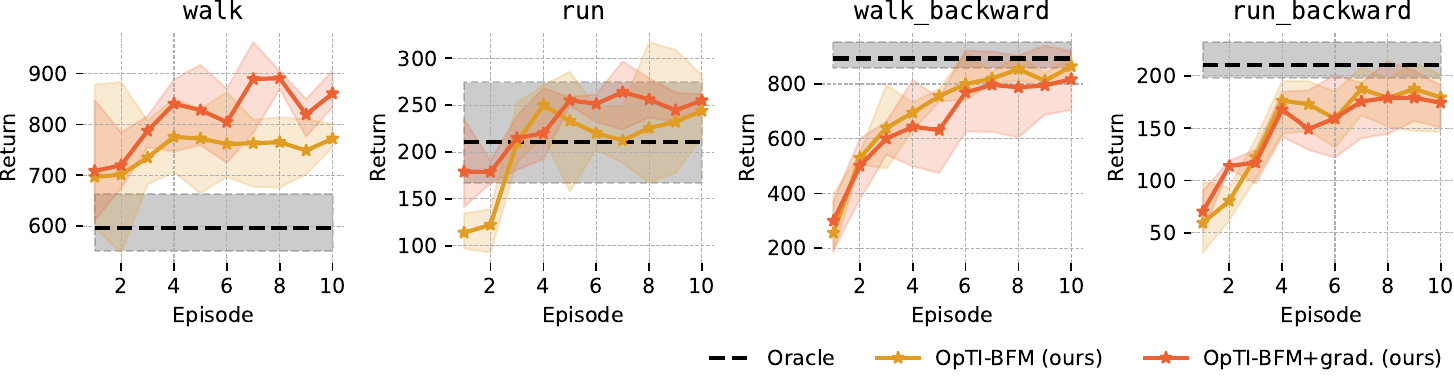}
        \caption{DMC Cheetah}
    \end{subfigure}\hfill
    \begin{subfigure}{\linewidth}
        \includegraphics[width=\linewidth]{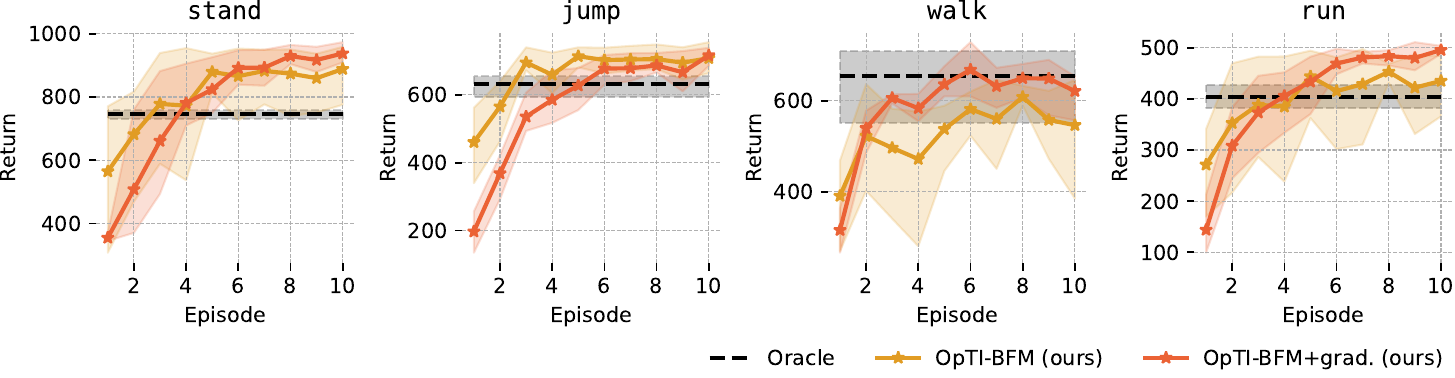}
        \caption{DMC Quadruped}
    \end{subfigure}
    \vspace{-1.5em}
    \caption{\rebuttal{Return of \method and \method+grad.\ that use random-shooting and gradient ascent respectively to optimize the UCB objective. The two variants perform very similarly.}}\label{fig:gd-ext}
    \vspace{-1em}
\end{figure}

\begin{figure}
    \vspace{-1em}
    \centering
    \begin{subfigure}{\linewidth}
        \includegraphics[width=\linewidth]{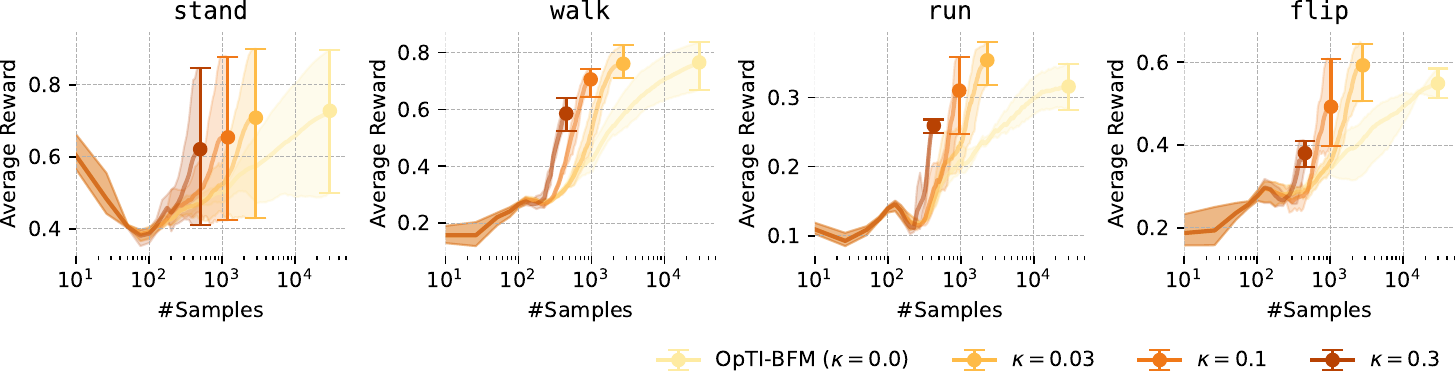}
        \caption{DMC Walker}
    \end{subfigure}\hfill
    \begin{subfigure}{\linewidth}
        \includegraphics[width=\linewidth]{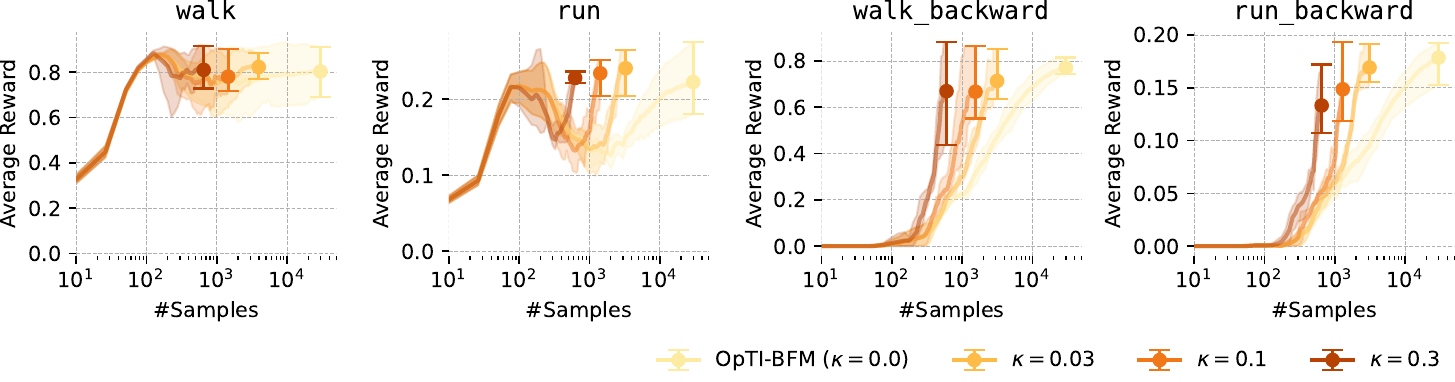}
        \caption{DMC Cheetah}
    \end{subfigure}\hfill
    \begin{subfigure}{\linewidth}
        \includegraphics[width=\linewidth]{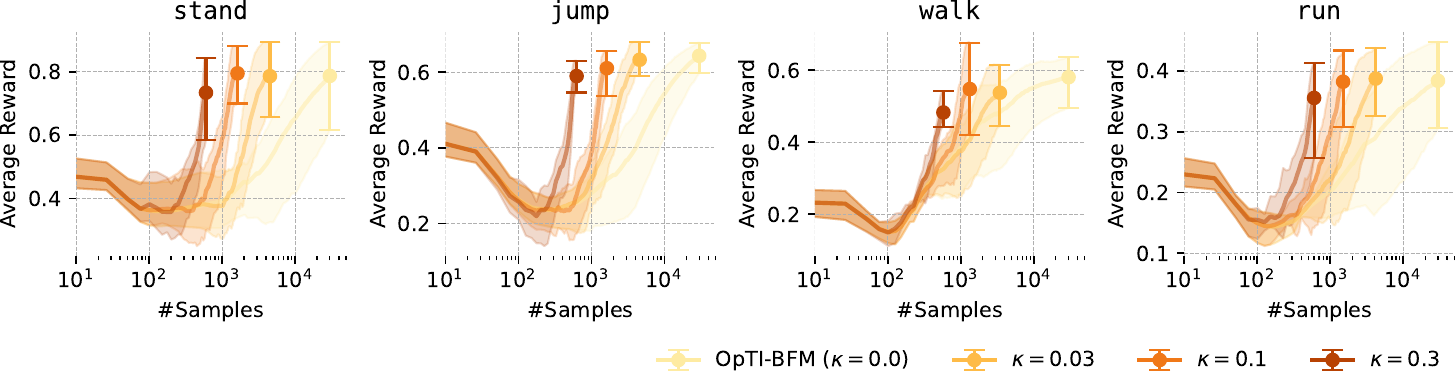}
        \caption{DMC Quadruped}
    \end{subfigure}
    \vspace{-1.5em}
    \caption{
    Average reward of \method at different numbers of requested reward labels (\# Samples) for different information thresholds $\kappa$.
    We stop interaction after 30k environment steps.
    $\kappa$ trades-off interaction cost with labeling cost.
    More than one order of magnitude less reward labels can still result in the same performance in easier tasks.
    }\label{fig:kappa-ucb}
    \vspace{-1em}
\end{figure}
\begin{figure}
    \vspace{-1em}
    \centering
    \begin{subfigure}{\linewidth}
        \includegraphics[width=\linewidth]{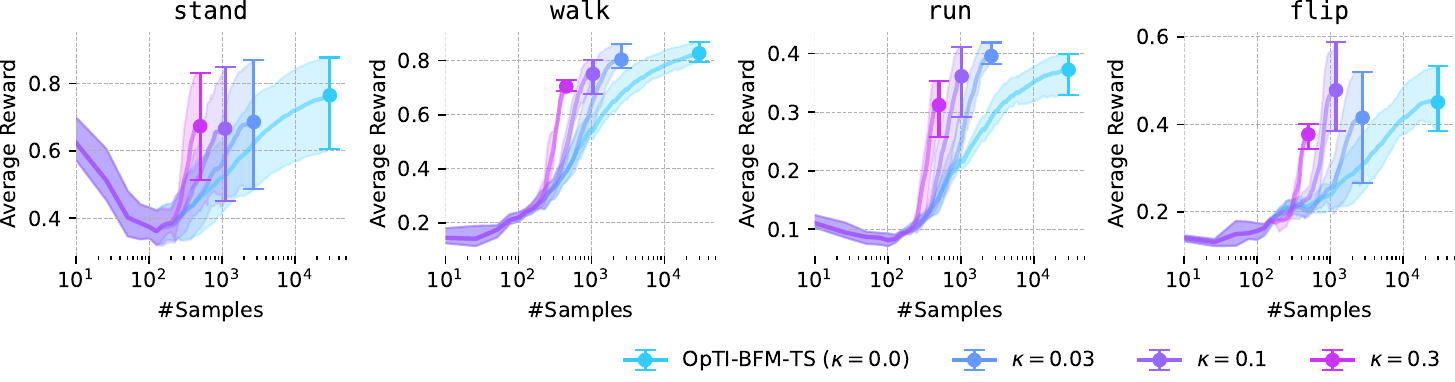}
        \caption{DMC Walker}
    \end{subfigure}\hfill
    \begin{subfigure}{\linewidth}
        \includegraphics[width=\linewidth]{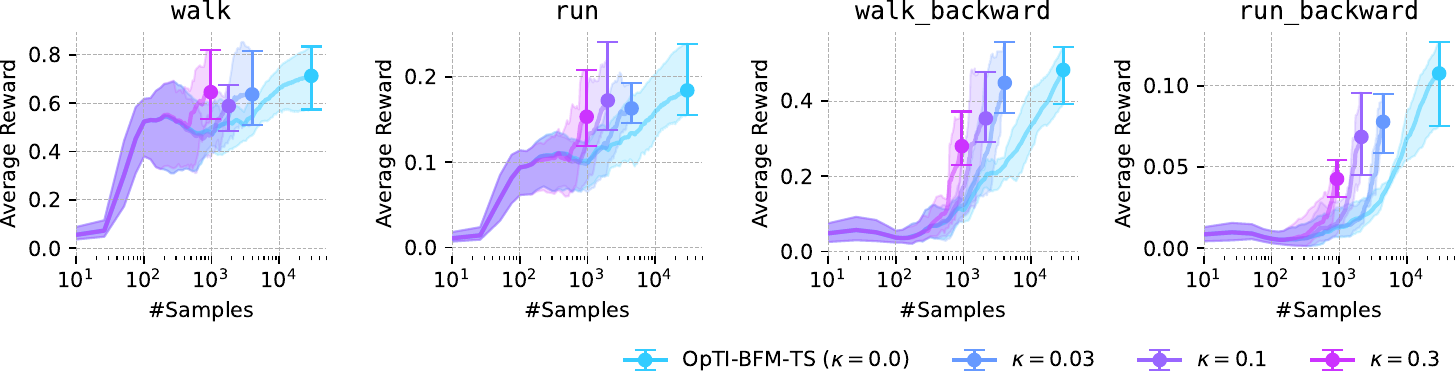}
        \caption{DMC Cheetah}
    \end{subfigure}\hfill
    \begin{subfigure}{\linewidth}
        \includegraphics[width=\linewidth]{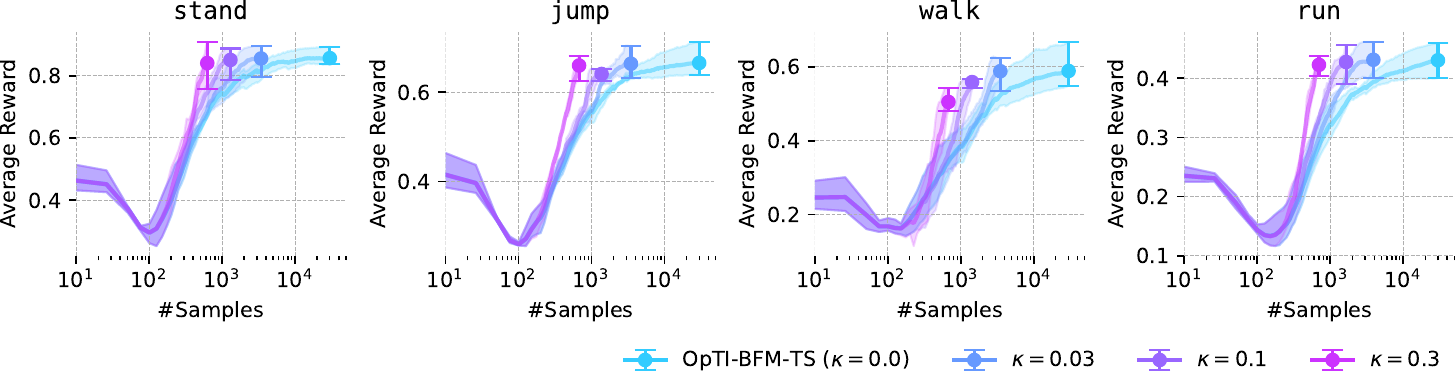}
        \caption{DMC Quadruped}
    \end{subfigure}
    \vspace{-1.5em}
    \caption{Average reward of \method-TS at different numbers of requested reward labels (\# Samples) for different information thresholds $\kappa$.
    }\label{fig:kappa-ts}
    \vspace{-1em}
\end{figure}

\section{Training Protocol}\label{sec:train-protocol}
\subsection{Forward Backward Framework}
We use the forward backward (FB) framework \citep{touatiLearningOneRepresentation2021} as the base BFM throught our experiments.
To fully explain FB, we quickly introduce the successor measure \citep{dayanImprovingGeneralizationTemporal1993,blierLearningSuccessorStates2021} $M^\pi$; it is defined as
\begin{align}
    M^{\pi}(s_0,a_0,X) = \sum_{t\ge 0} \gamma^t \Pr[s_t \in X | s_0, a_0, \pi].
\end{align}
and can be thought of as the discounted state occupancy of a policy $\pi$ when starting in $s_0, a_0$.
FB then learns the low-rank decomposition of the successor measure density \wrt the empirical dataset measure $\mathcal{D}_\text{train}(ds)$:
\begin{align}
    M^{\pi_z}(s,a,ds^+) = F(s,a,z)\T B(s^+) \mathcal{D}_\text{train}(ds^+).
\end{align}
where the policy should satisfy
\begin{align}
    \pi_z(a|s) > 0 \implies a \in \argmax_a F(s,a,z)\T z.
\end{align}

The successor measure and successor features are closely related, since we have that
\begin{align}
    \psi^\pi(s,a) = \int M^\pi(s,a, ds^+) \phi(s^+).
\end{align}
Following \cite{touatiLearningOneRepresentation2021} (Theorem 13), we thus have for FB
\begin{align}
    \int B(s^+) M^\pi_z(s,a, ds^+)
    &= \int B(s^+) B(s^+)\T \mathcal{D}_\text{train}(ds^+) F(s,a,z)
    \\ &= \E_{s\sim\mathcal{D}_\text{train}} [B(s^+) B(s^+)\T] F(s,a,z)
    \\ &= (\Cov_{\mathcal{D}_\text{train}}B) F(s,a,z)
\end{align}
so $F(s,a,z)$ are SFs of features $\phi(s) = (\Cov_{\mathcal{D}_\text{train}}B)^{-1} B(s)$.
Task inference for FB with labeled dataset $\mathcal{D}$ then effectively becomes 
\begin{align}
    z_r 
    &= (\Cov_\mathcal{D} \phi)^{-1} \E[\phi(s)r(s)]
    \\&= \left(\Cov_\mathcal{D}\left((\Cov_{\mathcal{D}_\text{train}}B)^{-1} B\right)\right)^{-1} (\Cov_{\mathcal{D}_\text{train}}B)^{-1} \E[B(s) r(s)]
    \\&= (\Cov_{\mathcal{D}_\text{train}}B) (\Cov_\mathcal{D} B)^{-1} (\Cov_{\mathcal{D}_\text{train}}B)(\Cov_{\mathcal{D}_\text{train}}B)^{-1} \E[B(s) r(s)]
    \\&= (\Cov_{\mathcal{D}_\text{train}}B) (\Cov_\mathcal{D} B)^{-1} \E[B(s) r(s)]
\end{align}
which is consistent with \citet{touatiLearningOneRepresentation2021} (Proposition 15).
In practice we pre-compute $(\Cov_{\mathcal{D}_\text{train}}B)$ with 50k samples from $\mathcal{D}_\text{train}$ after training.

\subsection{Pre-training}
As is the standard in zero-shot RL benchmarks \citep{touatiDoesZeroShotReinforcement2023,agarwalProtoSuccessorMeasure2025}, we train our FB model using an offline dataset collected with RND \citep{burdaExplorationRandomNetwork2018,yaratsDonChangeAlgorithm2022} consisting of 10M transitions.

FB is trained using 3 losses.
Firstly, the successor measure loss \citep{blierLearningSuccessorStates2021}
\begin{align}
    \mathcal{L}_{FB}(z) = \E_{\substack{s,a,s' \sim \mathcal{D}_\text{train} \\ a' \sim \pi_z(\cdot|s')}} \E_{s^+ \sim \mathcal{D}_\text{train}} \bigg[ &-2F(s,a,z)\T B(s')
    \\ &+ \left(F(s,a,z)\T B(s^+) - \gamma \overline F(s',a',z)\T \overline{B}(s^+)\right)^2 \bigg],
\end{align}
where $\overline{F},\overline{B}$ are target networks,
secondly, an orthogonality regularizing loss on $B$
\begin{align}
    \mathcal{L}_\text{ortho} = \E_{\substack{s\sim\mathcal{D}_\text{train} \\ s' \sim \mathcal{D}_\text{train}}} \left[-2 \norm{B(s)}_2^2 + B(s)\T B(s') \right],
\end{align}
and thirdly a DDPG-style loss for $\pi$:
\begin{align}
    \mathcal{L}_\pi(z) = \E_{s \sim \mathcal{D}_\text{train}} \E_{a\sim \pi_z(\cdot|s)}\left[ -F(s,a,z)\T z \right],
\end{align}
using the reparameterization trick for Gaussian policies.

\subsection{Architecture and Hyper-parameters}
We choose to follow the implementation details of the most recent work on FB \citep{tirinzoniZeroShotWholeBodyHumanoid2025}, specifically their implementation for DMC in the released code-base.
The $B$-network is a 3-layer MLP.
The $F$ is an ensemble of size 2 with each two 2-layer MLPs to encode the arguments $(s,a)$ and $(s,z)$ that are then concatenated and fed into another 2-layer MLP.
$\pi$ uses two 2-layer MLPs to encode the arguments $s$ and $(s,z)$ and then also another 2-layer MLP to map from the concatenated layers to a mean $\mu$.
The policy is then a truncated Gaussian with fixed standard deviation of $\sigma=0.2$ that truncates at $1.5\sigma$.
The rest of the pre-training hyper-parameters are listed in \cref{tab:fb-param} below.
The task embedding space of FB is the $d$-dimensional hyper-sphere since the optimal policy $\pi_z$ is invariant to the scale of the reward function.
Note that \method cannot make this simplification because it tries to estimate the hidden parameter of the reward function and not the latent that is plugged into the policy.
We can recover the latter easily with an L2 normalization.

\begin{table}
    \centering
    \caption{FB Hyper-Parameters.}\label{tab:fb-param}
    \begin{tabular}{lc}
        \toprule
        Name & Value \\
        \midrule
        discount $\gamma$ & 0.98 \\
        batch size & 1024 \\
        \# training steps & 2M \\
        \midrule
        optimizer & adam \\
        learning rate & 1e-4 \\
        target network update factor & 0.01 \\
        \midrule
        weight of $\mathcal{L}_\text{ortho}$ & 1.0 \\
        $Q$-value penalty & 0.5 \\
        \midrule
        fixed actor standard deviation & 0.2 \\
        actor sample noise clipping & 0.3 \\
        $z$ sampling & 50\% $\mathcal{D}_\text{train}$ and 50\% Random \\
        \midrule
        dimension $d$ & 50 \\
        $B$ network final activation & L2 normalization \\
        $B$ network hidden dimension & 256 \\
        $F$ network hidden dimension & 1024 \\
        $\pi$ network hidden dimension & 1024 \\
        all networks first activation & Layernorm + Tanh \\
        all other activations & relu \\
        \bottomrule
    \end{tabular}
\end{table}

\section{Experiment Protocol}\label{sec:exp-protocol}
We evaluate our methods on a common zero-shot RL benchmark \citep{touatiDoesZeroShotReinforcement2023,agarwalProtoSuccessorMeasure2025}. 
The benchmark consists of the Walker, Cheetah, and Quadruped environments from DMC \citep{tassaDeepMindControlSuite2018} with four tasks (reward functions) each.
Note that each task has randomized initial states, so
when evaluating single episode performance, \eg \cref{fig:data-ext}, we report the mean over 20 episodes.
And when evaluating task inference performance over multiple episode, \eg, \cref{fig:main,fig:ep}, we report the mean over 10 trials.
All error-bars and shaded regions denote min-max-intervals over three training seeds around mean performance.

\subsection{Custom Walker Velocity Tasks}
The custom tasks we implement for DMC Walker \cref{fig:drift-vel} consist of only the velocity tracking components of the \texttt{stand}, \texttt{walk}, and \texttt{run} tasks.
Further note that, by default, higher velocity is always allowed: a running policy will also perform rather well in standing.
For this reason, we modify the velocity reward bonus so that it is 1 if the velocity matches the target exactly, and then linearly tapers to 0.5 when off by 2. 
From there, the reward then directly drops to 0.

\section{Inference Hyper-Parameters}
For each experiment and method we perform a grid-search over hyper-parameters.
We then choose the hyper-parameters with highest overall cumulative return \textit{per environment} for each method to report performance.

\subsection{\method}
We found both \method and \method-TS to be very robust to the hyper-parameters we tested.
For each method we consider a single hyper-parameter with three values each:
For UCB, we test a fixed $\beta_t = \beta \in \{1.0, 0.1, 0.001\}$.
For TS, we test $\sigma \in \{0.1, 0.001, 0.0001\}$.
This means all other parameters where held constant.
Specifically, $\rho=1$ and $\lambda=1$ if not reported otherwise (\cref{fig:drift-vel}).
The number of samples for the UCB optimization was $n=128$ throughout \rebuttal{if not specified otherwise}.

\subsection{LoLA}
For LoLA we consider a range of hyper-parameters to trade-off exploration, update frequency, update step-size, and variance in the gradient.
We search all combinations of:
\begin{itemize}
    \item horizon length and task embedding update rate $\{50, 100, 250\}$ as in \citep{sikchiFastAdaptationBehavioral2025};
    \item learning rate $\{0.1, 0.05\}$ as in \citep{sikchiFastAdaptationBehavioral2025};
    \item standard deviation of the Gaussian on the task embedding $\{0.05, 0.1, 0.2\}$.
    \item The batch size reported in \cite{sikchiFastAdaptationBehavioral2025} is 5-10: we thus consider a replay buffer of 1000, which results in an effective batch size of 20, 10, and 4 for the respective update rates.
\end{itemize}

\end{document}